\newtheorem{theorem}{Theorem}
\title{IC classifier: a classifier for 3D industrial components based on geometric prior using GNN}
\author[1,2]{Zipeng Lin}
\author[3,4,5]{\thanks{Please address all correspondences to this author (zhenguonie@tsinghua.edu.cn} Zhenguo Nie}
\affil[1]{University of California, Berkeley}
\affil[2]{Computer Science division at UC Berkeley}
\affil[3]{Department of Mechanical Engineering, Tsinghua University, Beijing, 100084, China}
\affil[4]{State Key Laboratory of Tribology in Advanced Equipment, Beijing, 100084, China}
\affil[5]{Beijing Key Lab of Precision/Ultra-precision Manufacturing Equipments and Control, Beijing, 100084, China}
\affil[1]{\textit {zp-l@berkeley.edu}}
\affil[3]{\textit {zhenguonie@tsinghua.edu.cn}}
\begin{document}
\maketitle

\begin{abstract}
   In this paper, we propose an approach to address the problem of classifying 3D industrial components by introducing a novel framework named IC-classifier (Industrial Component classifier). Our framework is designed to focus on the object's local and global structures, emphasizing the former by incorporating specific local features for embedding the model. By utilizing graphical neural networks and embedding derived from geometric properties, IC-classifier facilitates the exploration of the local structures of the object while using geometric attention for the analysis of global structures. Furthermore, the framework uses point clouds to circumvent the heavy computation workload. The proposed framework's performance is benchmarked against state-of-the-art models, demonstrating its potential to compete in the field.
\end{abstract}

\keywords{Industrial Component classifier \and Computer vision \and Graphical neural network \and Classificaiton}

\section{Introduction}
\label{sec:intro}

While deep learning has performed well on image processing, optimization ~\cite{nie2021topologygan}, and identification task, sometimes images convey more information to complete the task. For instance, it would be hard to identify a 3D object or do segmentation by just using one image. Therefore, much 3D data are captured from 3D objects, such as RGB-D cameras or lidar (Light Detection and Ranging). The data structure is in the form of point clouds, a group of coordinates where each point is expressed as 3D coordinates $x, y, z$, and its other features. In our data, we primarily use the 3D coordinate of points.

There are three main challenges to developing deep learning frameworks on the 3D point structure: First, point clouds are different from images since they do not have apparent spatial structures that are dense. A point cloud might be sparse in some places and dense in others. Therefore, the framework would require finding a proper, efficient representation and capturing dense information from the sparse point cloud. Secondly, no matter how the size of the 3D object changes or how much it rotates, the framework's output should be the same. Thus, the framework should follow size, translation, and rotation invariance. As several ways of representation of the 3D data are proposed, another challenge emerges: some representations and their corresponding framework, as introduced in the next section, would have too big a run-time and space complexity, which is not applicable. 

Although frameworks like PointNet have answered the above challenges well, performance would be a significant concern. The difficulty is maintaining a balance between global features and local features. An ideal classification method is to achieve both. The global feature helps to capture a general idea of the object, while the local feature is also crucial since it reveals detail about the object. When classifying objects that differ in detail, capturing the local feature would be crucial. Therefore, we propose our model that utilizes both global and local features. Beyond that, we also incorporate geometric features, computed by the geometric properties of the object, into our model to further capture the local geometric detail of the objects. 

The paper will proceed as follows: in the second section, we will introduce the past work of 3D object classification, including how prior works choose the different representations of 3D data and explain our choice. Then, we discuss the ideas of our framework, including using a graphical neural network, preprocessing the dataset using geometric features, and so on. After that, we will illustrate the performance of our model by showing the confusion matrix, AUC-ROC curve, and comparison with our baseline, the PointNet model.

\section{Previous work}
\label{sec:previous_work}

The progress of developing deep learning models for 3D object classification is closely related to how those models treat and view 3D models. Initially, researchers have developed different representations of 3D points and corresponding models since it became clear that it is hard to do the task conventionally: the data of 3D objects is irregular and has one more dimension than the 2D data. Therefore, traditional methods that work on 2D images, such as CNN, cut short for 3D tasks if applied directly. The following summarizes the attempts to view 3D data structures differently and their derived models.

\subsection{Converting 3D model to 2D} 
To make 3D classification similar to 2d image classification, some people use multi-view to represent the 3d model ~\cite{RN23}.
A group of 2D images represents a 3D model by taking pictures at different angles to represent a multi-view model. Several models exploit the features of those pictures to grasp the feature for 3D data. 
For example, Hang et al. developed a model (MVCNN) ~\cite{RN23} that passes through each picture to a convolution neural network (CNN) and aggregates all the features through another CNN. 
Another way to deal with Multi-view images is by Feng et al. (GVCNN) ~\cite{RN15}, which gives picture discrimination scores before processing. However, multi-view could be better since it requires an algorithm to take pictures from angles, and it the difficulty to capture all the features from just a few pictures. 

In addition to using multi-view representations of the data, some models also attempt to use projection on 3D models. For instance,  Zhu et al. ~\cite{RN75} proposed a framework that uses autoencoder, an encoder-decoder model with the same output shape as input, to classify 3D shapes by feature learning after projecting them into 2D space.

Converting 3D data into 2D is beneficial because commonly used machine learning models like CNN are good at processing 2D shapes. However, using finite screenshots of 3D data might only catch part of the picture of the 3D model, as there are infinitely many angles to take such screenshots, and, likely, those images only tell part of the story. Therefore, works that focus on the raw 3D data itself were presented.

\subsection{Other 3D representation}

Unlike methods that depend on multi-view representations of the object, volumetric-based methods try to depict the 3D structure through techniques such as vocalization. Voxels ~\cite{nie2019voxel} are like pixels but for three dimension objects. Methods to utilize voxels for 3D object recognition start with VoxNet ~\cite{RN78}, where the 3d relationship inside an object is defined through blocks. The model works on 3D objects with sparse voxel representation but only works well on dense 3D data since the computation workload would be too much, and there is no promised runtime. 

Another representation to solve this problem is OctNet ~\cite{RN11}, which recursively partitions a point cloud using a grid-octree structure. Octree defines the 3D relationship through neighboring blocks. Octree-based CNN for 3D classification is also presented by Wang et al. ~\cite{RN11}. In the paper, the model hierarchically partitions point clouds and then encodes each octree into a bit string (with limited length), thus reducing run time. 

However, there are still issues with the above models since the asymptotic run-time analysis for the model still shows that the model runs slowly, and the amount for computation grows especially fast when the size of the data set increases.

\subsection{Miscellaneous data representation}
There are several other less-used data representations for 3D data, and corresponding models were proposed, and they are summarized here to ensure the past literature review is complete. Klovov et al. proposed KD-net ~\cite{klokov2017escape} that treats the 3D data as k-d tree and then trains a neural network. In the k-d tree, the tree's leaf nodes are normalized coordinates of the 3D data, while the non-leaf nodes are calculated from its children nodes with MLPs which share parameters to boost efficiency. Zeng et al. ~\cite{zeng20183dcontextnet} proposed a model with a similar idea as the K-D tree model but aggregates information of the model from more levels.

Rc-net proposed by Xu et al. ~\cite{xu2014rc} used RNN to accomplish point cloud embeddings. Instead of projecting, the model partitions the space into parallel beams and processes them. The shortcoming of the model is that it requires to consider 3D features when projecting, like the models that consider the 2D representation of 3D data above. Li et al. proposed So-net ~\cite{li2018so} that uses a low-dimensional representation of data, called self-organizing maps, to process the data.

\subsection{Point cloud representation}

Due to Multi-view and volumetric representation limitations, the point cloud format is used as the training data. PointNet, one of the pioneering frameworks ~\cite{RN1}, discovered the use of MLP (multi-layer perceptron) to satisfy the need for point classification perfectly. PointNet's success is attributed to its permutation and size invariance when the properties of the point cloud change. The fundamental concept is to learn a "spatial encoding of each point" and then aggregate all individual point features to create a "global point cloud signature ."The development of PointNet has inspired the creation of several other models.

\subsection{Variants of PointNet}

\subsection{Graphical neural networks}

Viewing the point cloud from a different perspective, the points in the point cloud can be seen as nodes in a graph, and the edges can be defined as relationships between the points. Graph neural network techniques attempt to apply a filter on a graph through the nodes' properties and edges, similar to applying CNN on a graph. Simonovsky and Komodakis ~\cite{simonovsky2017dynamic} were among the first to develop a framework that treats each point as a vertex of the graph and applies filters around the neighbors of a point. The information from the neighbors is aggregated to create a coarse graph from the original one. Dynamic graph CNN (DGCNN) ~\cite{RN422} uses an MLP to implement edges convolution from the edges. The framework employs a channel-wise symmetric aggregation, EdgeConv, to dynamically update the graph after each network layer. Inspired by DGCNN's approach, Hassani and Haley\cite{hassani2019unsupervised} developed a multi-task method to learn shape features using an encoder from multi-scale graphs and a decoder to process three unsupervised tasks. ClusterNet ~\cite{chen2019clusternet} employs a proven rotation-invariant module to generate rotation-invariant features from each point by processing its neighbor and constructing a hierarchical structure of a point cloud, which is similar to the approaches of DGCNN and PointNet++. The approach uses edge labels to get a convolution-like operation more suitable for point clouds. Unlike images, the 3D point cloud can be rotated, and the classification model should classify the same object, making the structure rotationally invariant.

So far, the existing methods rely on prior information and do not attempt to find the geometric properties of the point cloud. Even if some of the previous models try to get the local features of the 3D data, it still tries to do it by applying a model to the smaller portion of the data without understanding the information. By applying model-distilled data to the local 3D data, the model tries to find a pattern without understanding it. However, for some particular kinds of 3D data, like Industrial components,  there is detailed information in each category of such data that is only sometimes present in other categories. For example, some kinds of industrial components would display special geometric features.

We deem such information necessary for identifying objects like industrial parts. Therefore, inspired by other papers emphasizing the importance of treating the geometry of input \cite{nie2020stress}, we propose the following model, called IC-classifier, that tries to take in geometric priors and achieves better classification accuracy. From a higher point of view, our model displays the importance of including priors relevant to the dataset to improve efficiency.

\section{Data preparation}
We use the dataset from the mechanical parts benchmark generated by Kim, Chi, Hu, and Ramani ~\cite{sangpil2020large}. The raw data, however, have different points for different objects, so we will process that in the following sections to make the number of points in each model equal.

There are 67 categories of parts and 58696 objects in total.

\begin{figure}
     \centering
      \begin{subfigure}[b]{0.10\textwidth}
         \centering
         \includegraphics[width=\textwidth]{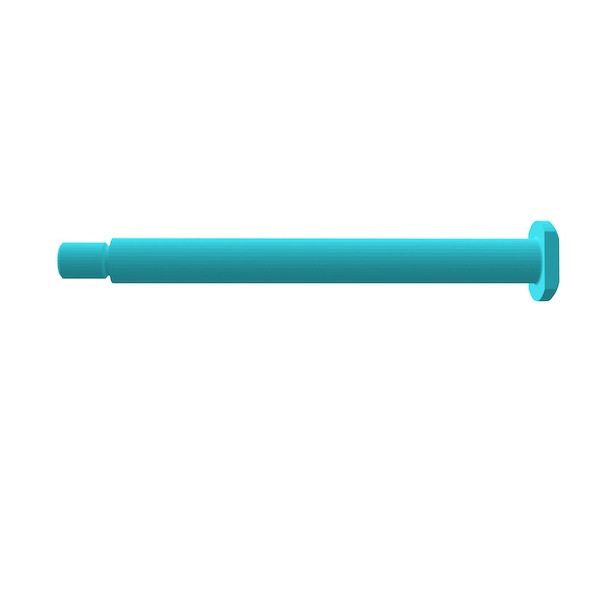}
     \end{subfigure}
     \begin{subfigure}[b]{0.10\textwidth}
         \centering
         \includegraphics[width=\textwidth]{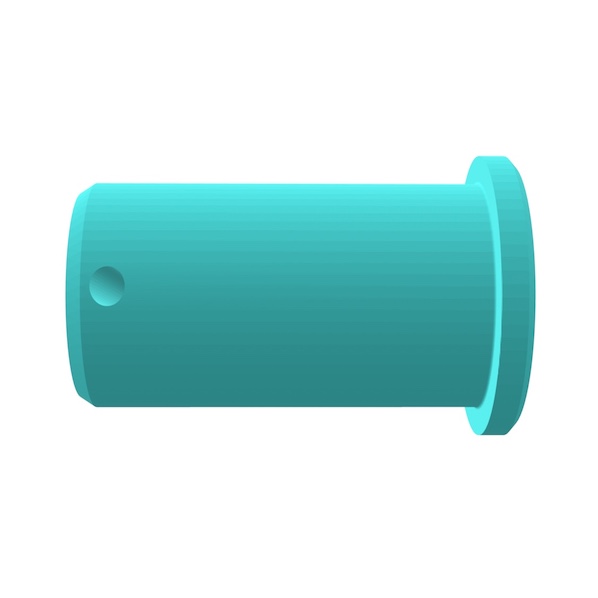}
     \end{subfigure}
     \begin{subfigure}[b]{0.10\textwidth}
         \centering
         \includegraphics[width=\textwidth]{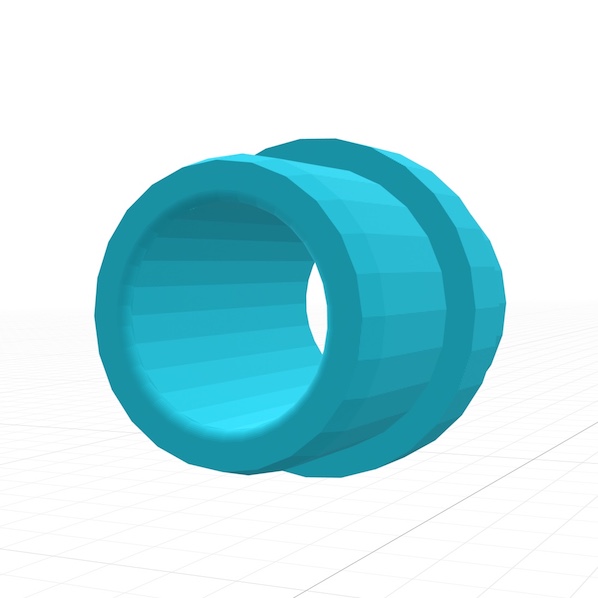}
     \end{subfigure}
     \begin{subfigure}[b]{0.10\textwidth}
         \centering
         \includegraphics[width=\textwidth]{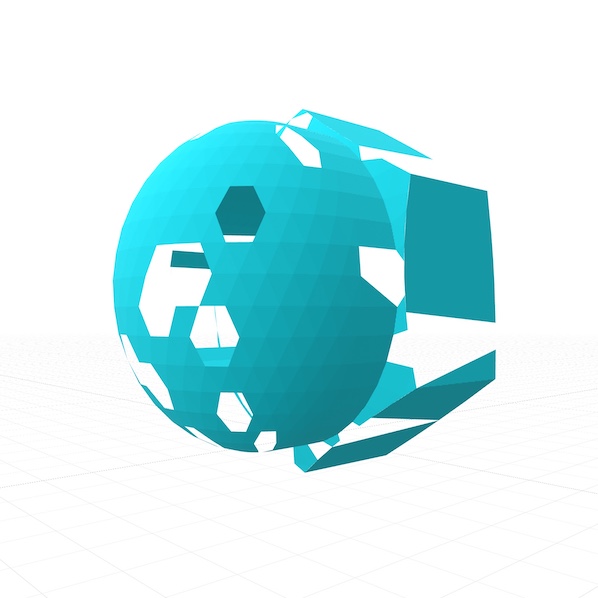}
     \end{subfigure}
     \begin{subfigure}[b]{0.10\textwidth}
         \centering
         \includegraphics[width=\textwidth]{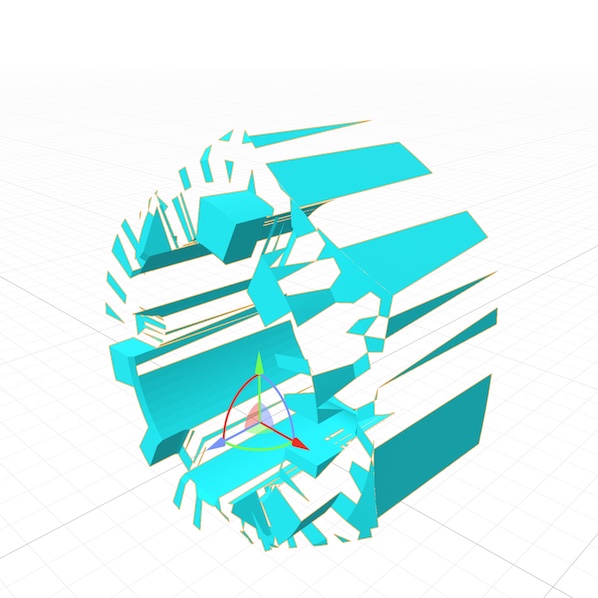}
     \end{subfigure}
    \begin{subfigure}[b]{0.10\textwidth}
         \centering
         \includegraphics[width=\textwidth]{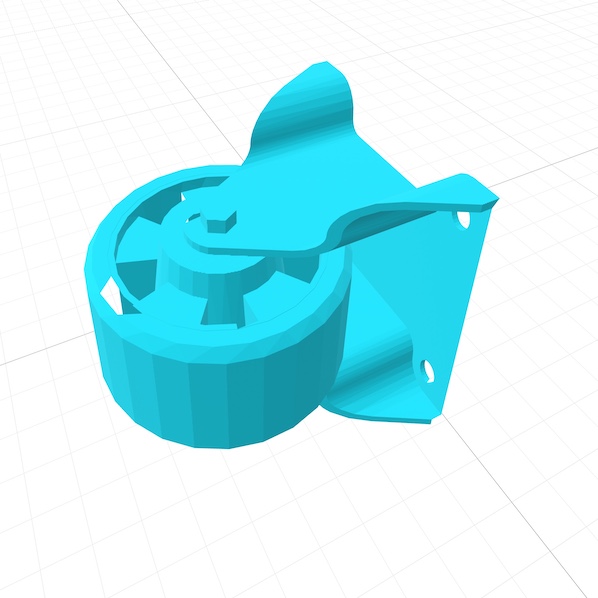}
     \end{subfigure}
    \begin{subfigure}[b]{0.10\textwidth}
         \centering
         \includegraphics[width=\textwidth]{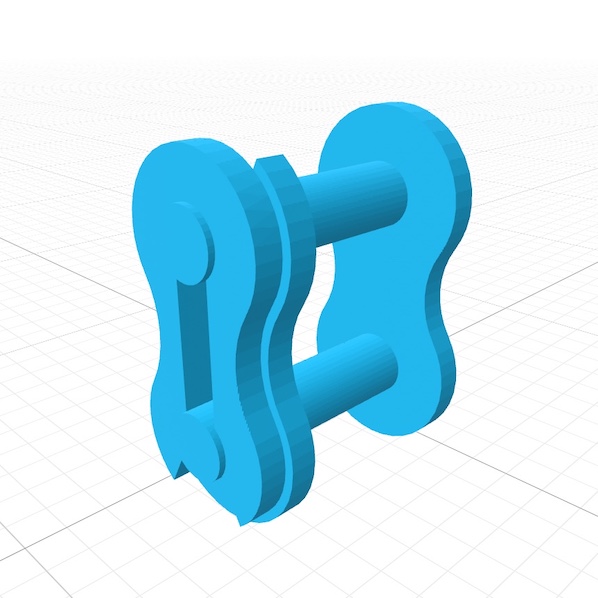}
     \end{subfigure}
    \begin{subfigure}[b]{0.10\textwidth}
         \centering
         \includegraphics[width=\textwidth]{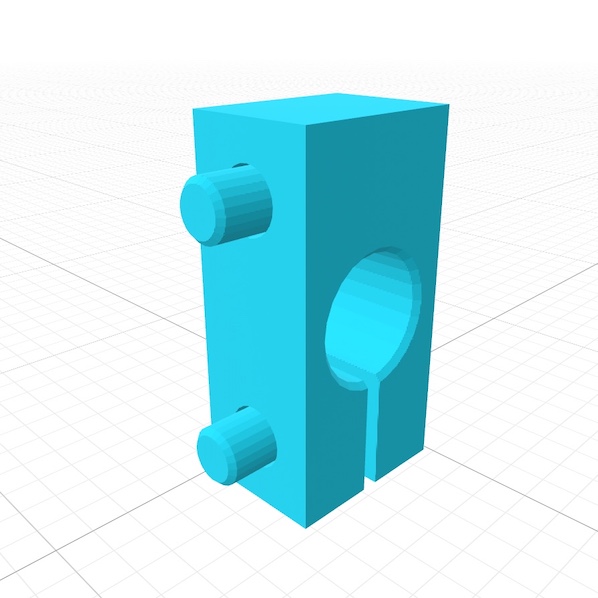}
     \end{subfigure}
    \begin{subfigure}[b]{0.10\textwidth}
         \centering
         \includegraphics[width=\textwidth]{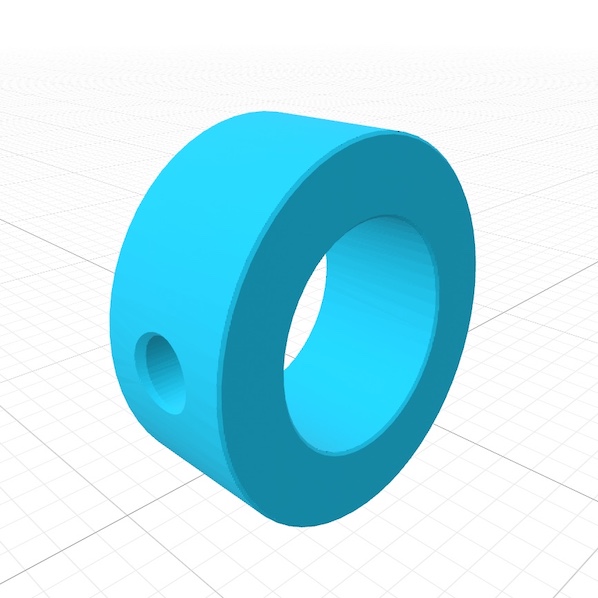}
     \end{subfigure}   
     \\
    \begin{subfigure}[b]{0.10\textwidth}
         \centering
         \includegraphics[width=\textwidth]{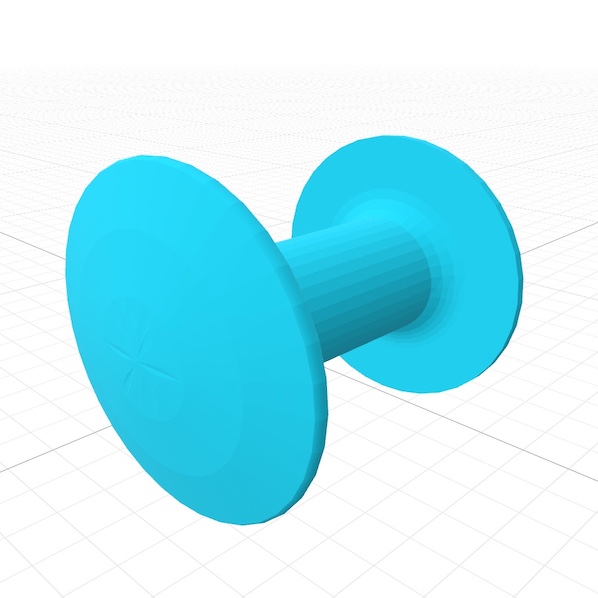}
     \end{subfigure}     
    \begin{subfigure}[b]{0.10\textwidth}
         \centering
         \includegraphics[width=\textwidth]{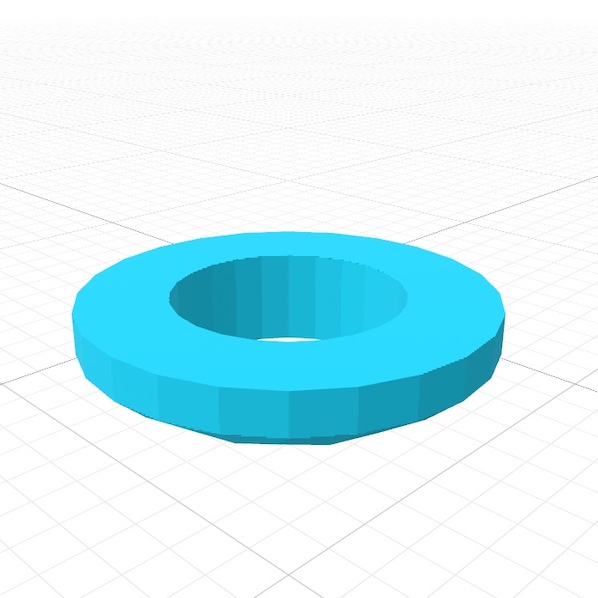}
     \end{subfigure}     
          \begin{subfigure}[b]{0.10\textwidth}
         \centering
         \includegraphics[width=\textwidth]{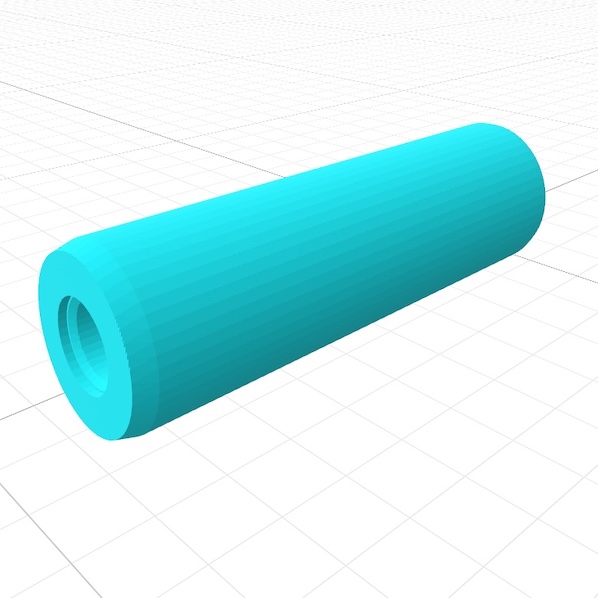}
     \end{subfigure}
     \begin{subfigure}[b]{0.10\textwidth}
         \centering
         \includegraphics[width=\textwidth]{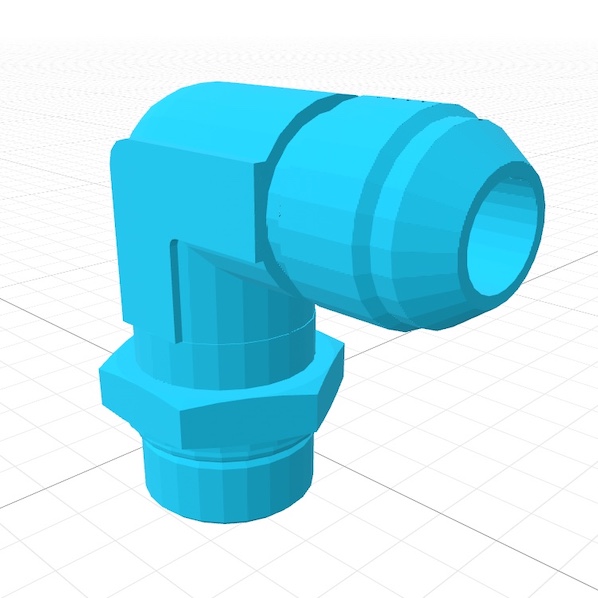}
     \end{subfigure}
     \begin{subfigure}[b]{0.10\textwidth}
         \centering
         \includegraphics[width=\textwidth]{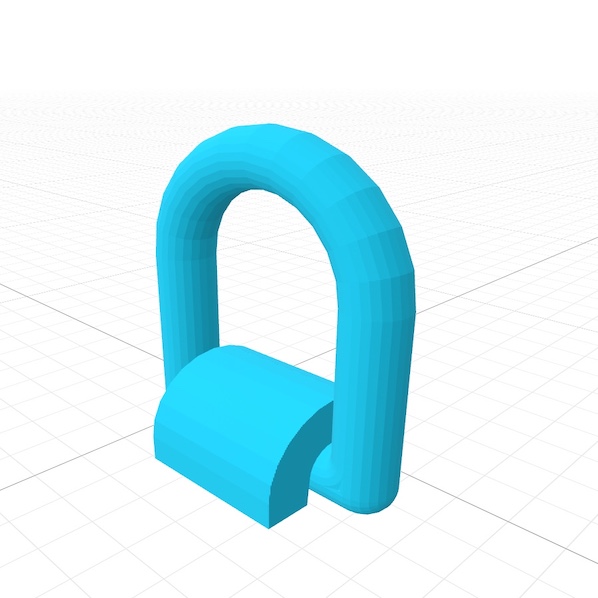}
     \end{subfigure}
     \begin{subfigure}[b]{0.10\textwidth}
         \centering
         \includegraphics[width=\textwidth]{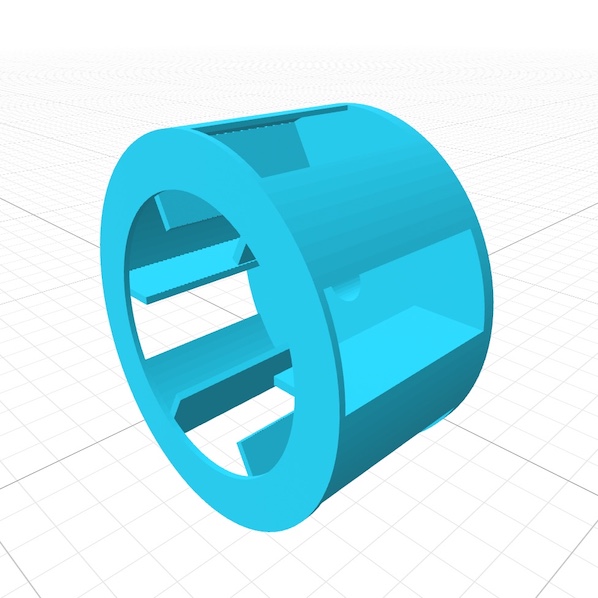}
     \end{subfigure}
    \begin{subfigure}[b]{0.10\textwidth}
         \centering
         \includegraphics[width=\textwidth]{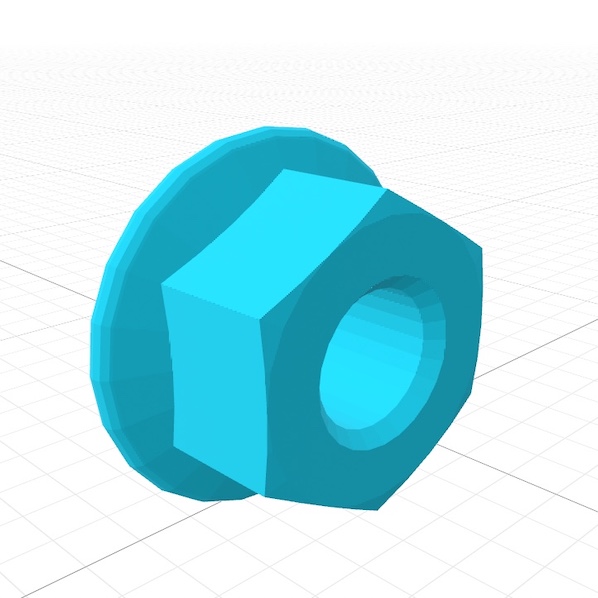}
     \end{subfigure}
    \begin{subfigure}[b]{0.10\textwidth}
         \centering
         \includegraphics[width=\textwidth]{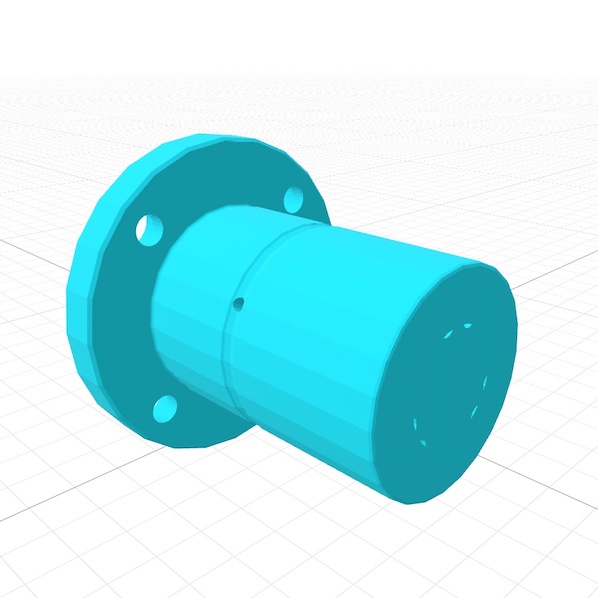}
     \end{subfigure}
    \begin{subfigure}[b]{0.10\textwidth}
         \centering
         \includegraphics[width=\textwidth]{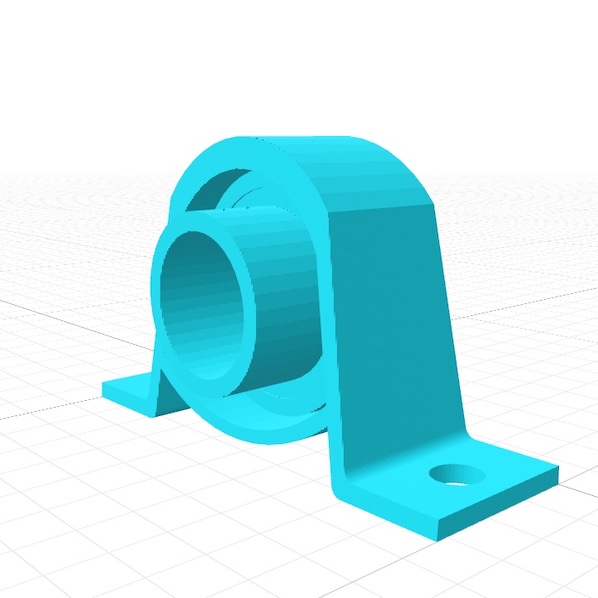}
     \end{subfigure}
     \\
    \begin{subfigure}[b]{0.10\textwidth}
         \centering
         \includegraphics[width=\textwidth]{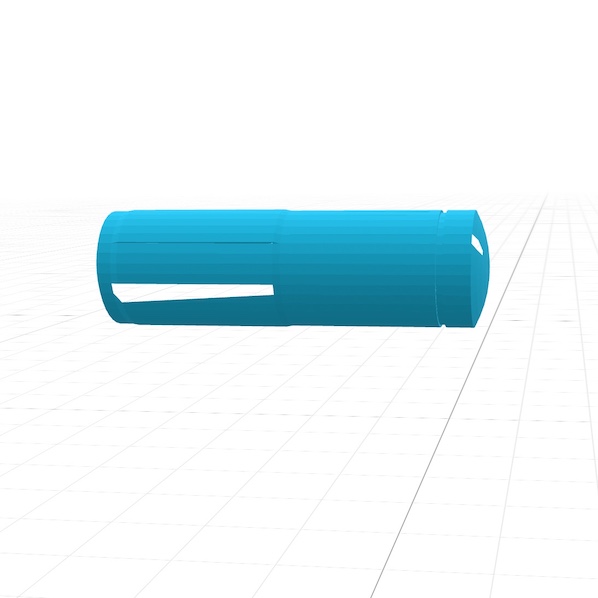}
     \end{subfigure}   
    \begin{subfigure}[b]{0.10\textwidth}
         \centering
         \includegraphics[width=\textwidth]{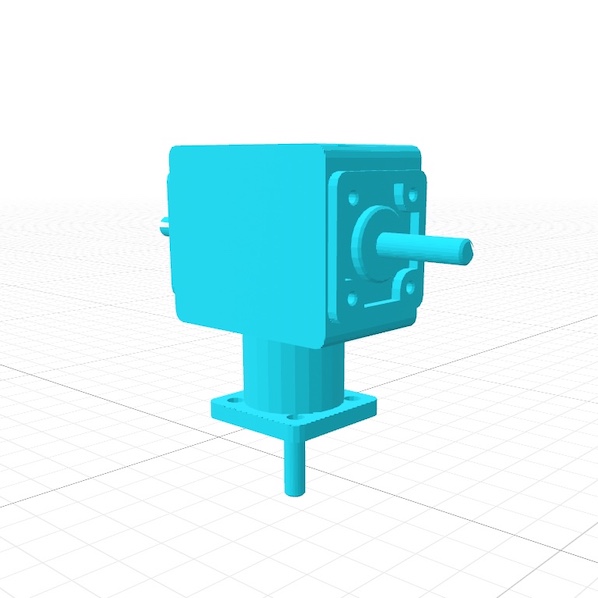}
     \end{subfigure}     
    \begin{subfigure}[b]{0.10\textwidth}
         \centering
         \includegraphics[width=\textwidth]{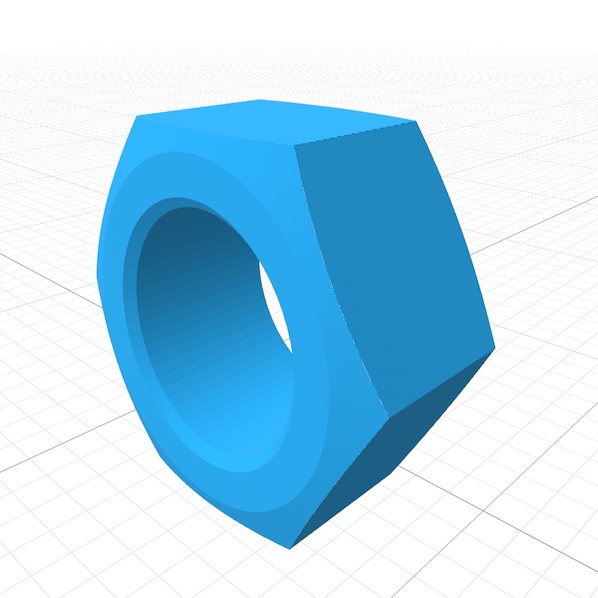}
     \end{subfigure}     
          \begin{subfigure}[b]{0.10\textwidth}
         \centering
         \includegraphics[width=\textwidth]{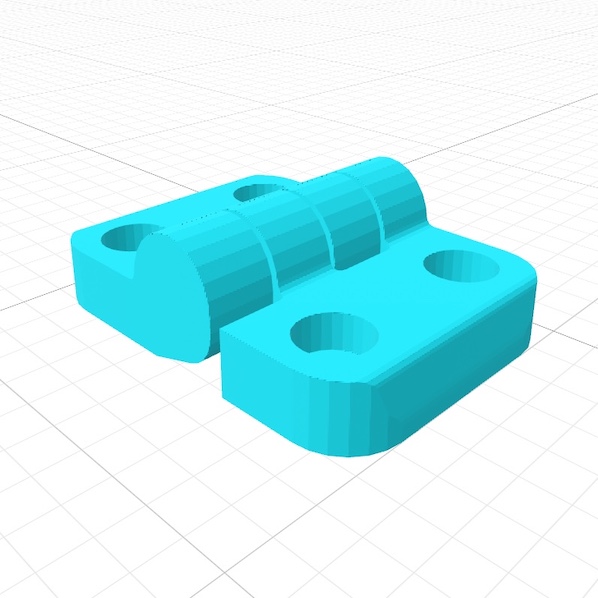}
     \end{subfigure}
     \begin{subfigure}[b]{0.10\textwidth}
         \centering
         \includegraphics[width=\textwidth]{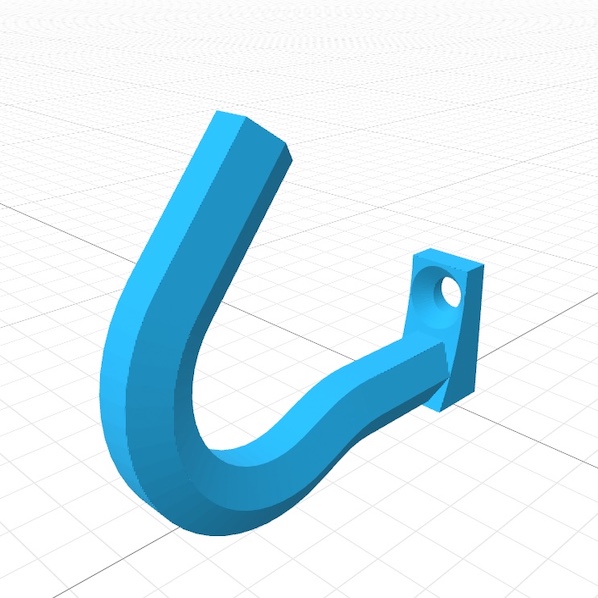}
     \end{subfigure}
     \begin{subfigure}[b]{0.10\textwidth}
         \centering
         \includegraphics[width=\textwidth]{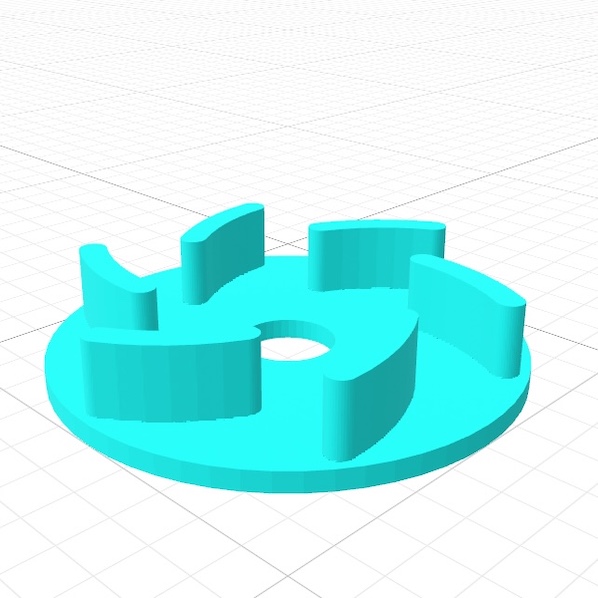}
     \end{subfigure}
     \begin{subfigure}[b]{0.10\textwidth}
         \centering
         \includegraphics[width=\textwidth]{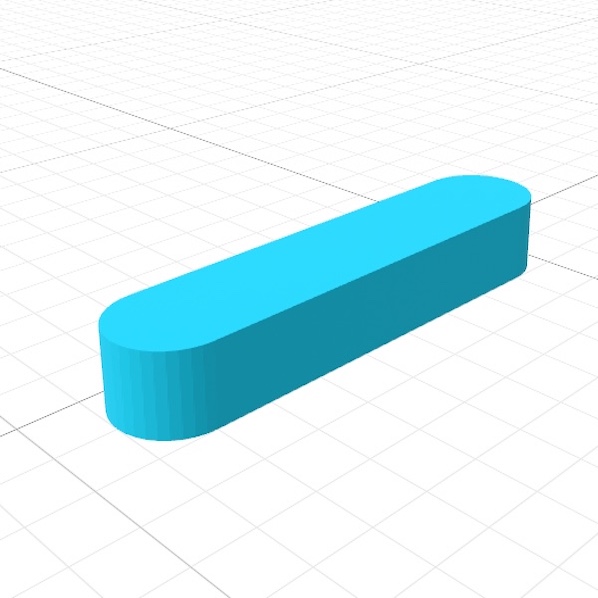}
     \end{subfigure}
    \begin{subfigure}[b]{0.10\textwidth}
         \centering
         \includegraphics[width=\textwidth]{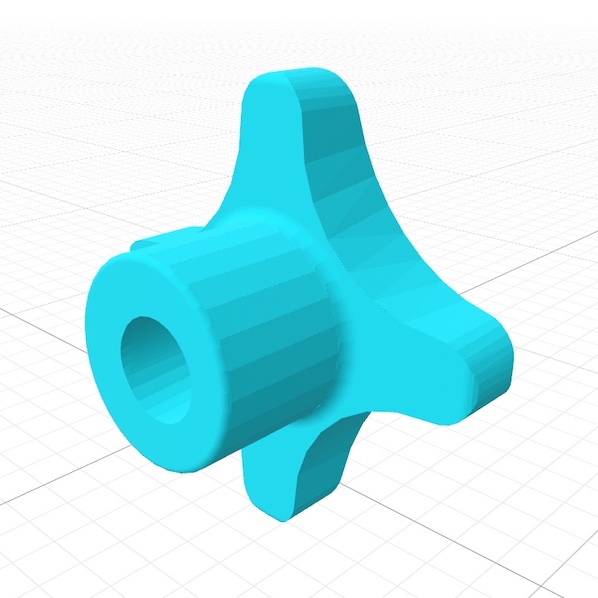}
     \end{subfigure}
    \begin{subfigure}[b]{0.10\textwidth}
         \centering
         \includegraphics[width=\textwidth]{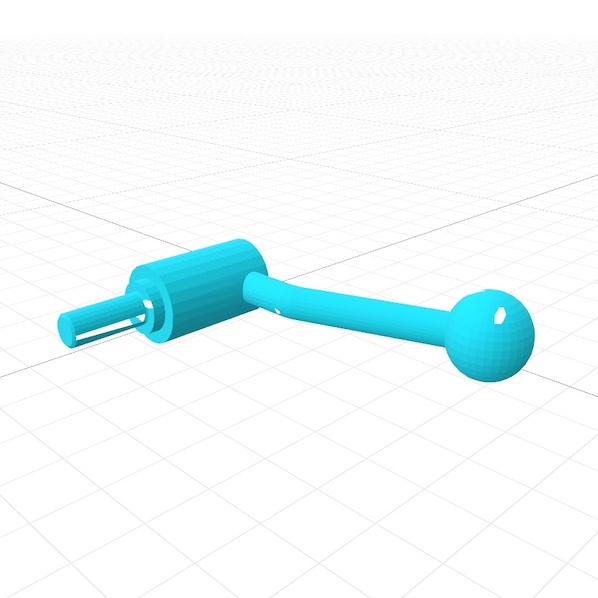}
     \end{subfigure}
     \\
    \begin{subfigure}[b]{0.10\textwidth}
         \centering
         \includegraphics[width=\textwidth]{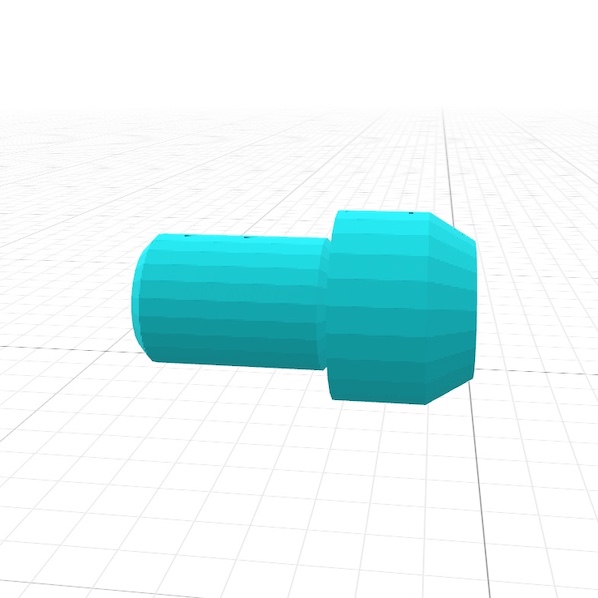}
     \end{subfigure}
    \begin{subfigure}[b]{0.10\textwidth}
         \centering
         \includegraphics[width=\textwidth]{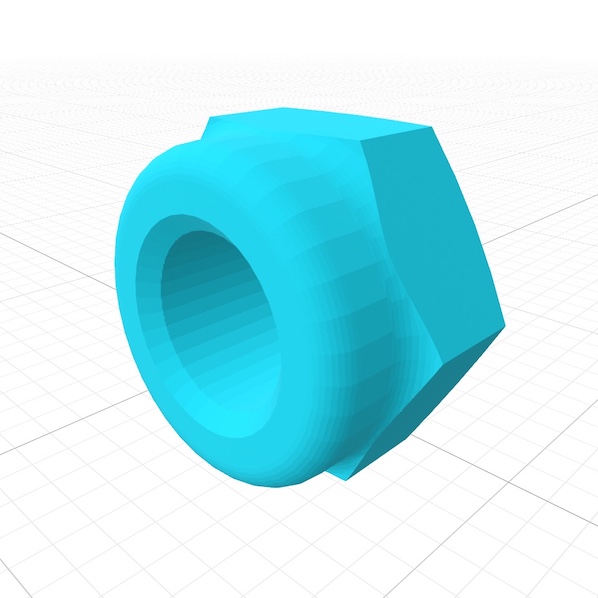}
     \end{subfigure}   
    \begin{subfigure}[b]{0.10\textwidth}
         \centering
         \includegraphics[width=\textwidth]{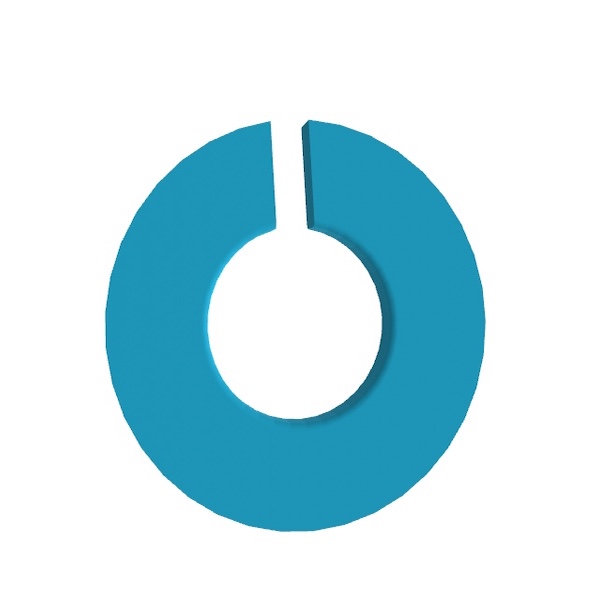}
     \end{subfigure}     
    \begin{subfigure}[b]{0.10\textwidth}
         \centering
         \includegraphics[width=\textwidth]{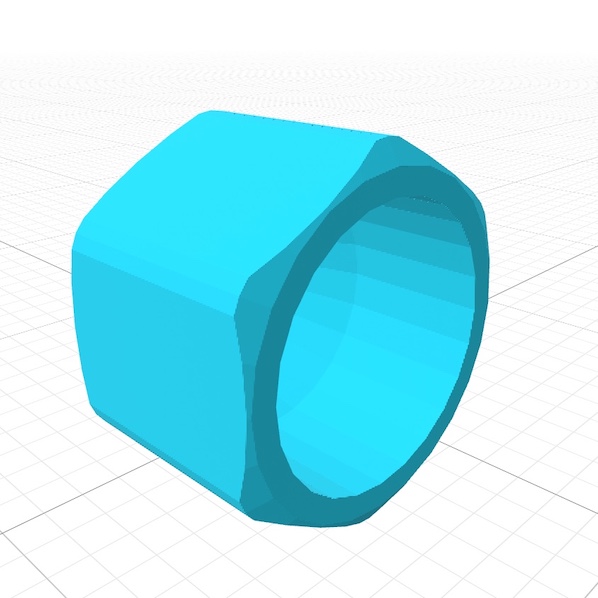}
     \end{subfigure}     
          \begin{subfigure}[b]{0.10\textwidth}
         \centering
         \includegraphics[width=\textwidth]{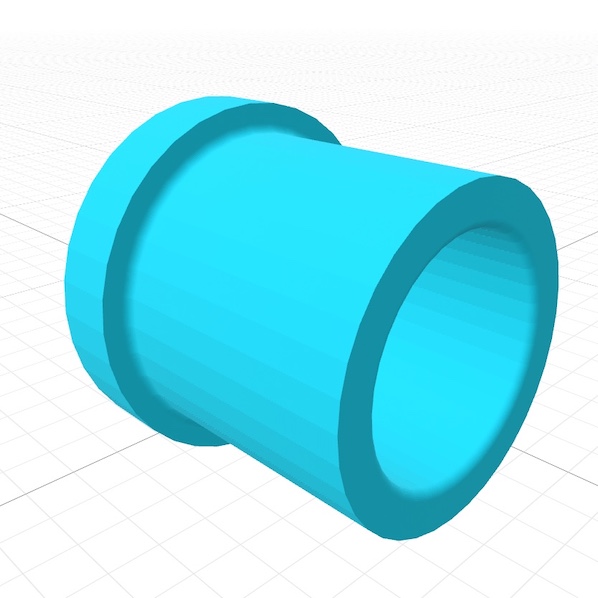}
     \end{subfigure}
     \begin{subfigure}[b]{0.10\textwidth}
         \centering
         \includegraphics[width=\textwidth]{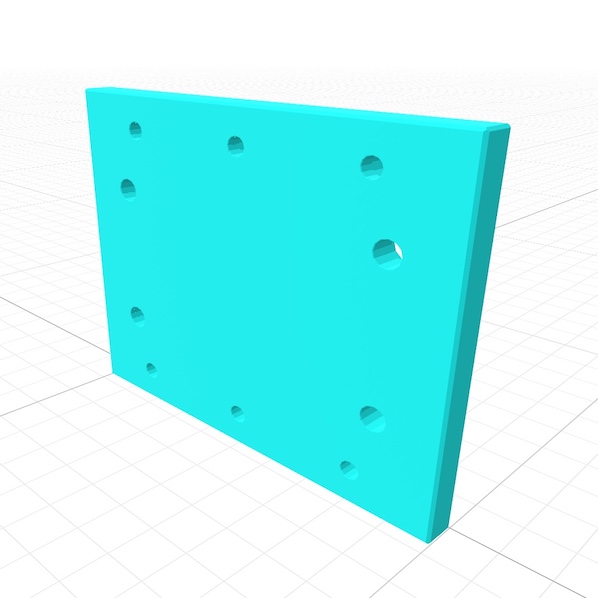}
     \end{subfigure}
     \begin{subfigure}[b]{0.10\textwidth}
         \centering
         \includegraphics[width=\textwidth]{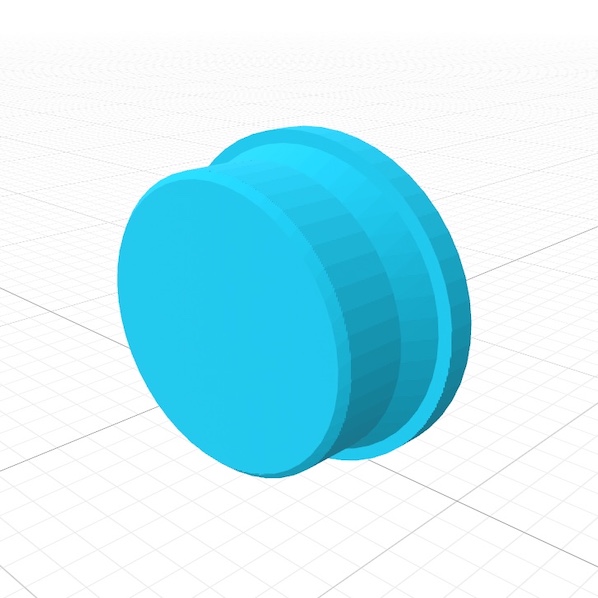}
     \end{subfigure}
     \begin{subfigure}[b]{0.10\textwidth}
         \centering
         \includegraphics[width=\textwidth]{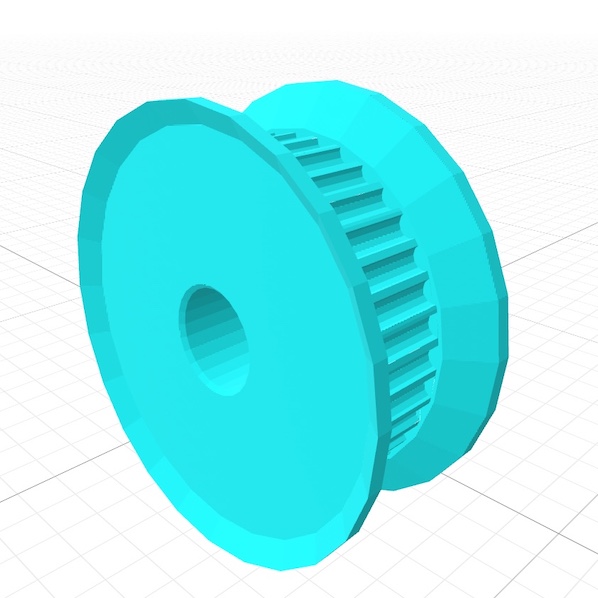}
     \end{subfigure}
    \begin{subfigure}[b]{0.10\textwidth}
         \centering
         \includegraphics[width=\textwidth]{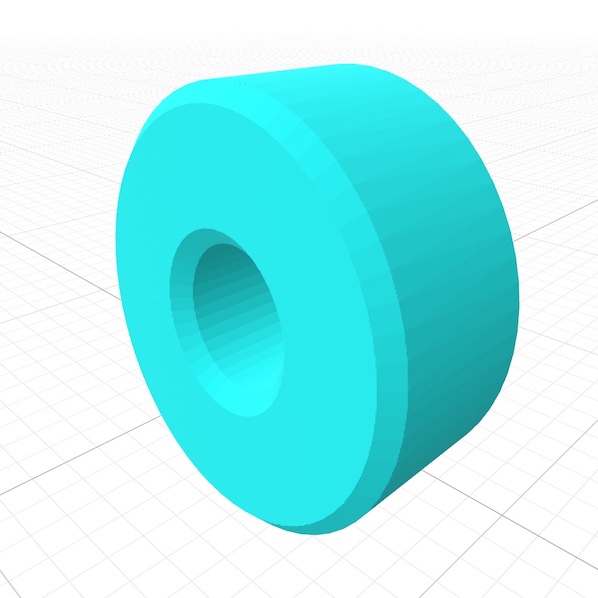}
     \end{subfigure}
     \\
    \begin{subfigure}[b]{0.10\textwidth}
         \centering
         \includegraphics[width=\textwidth]{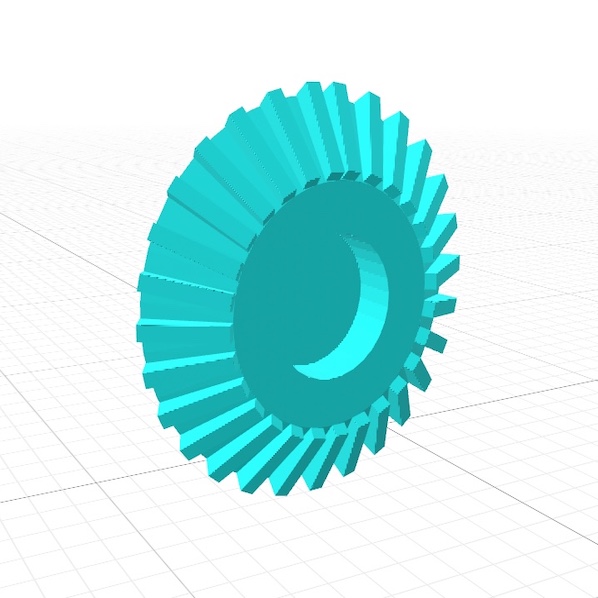}
     \end{subfigure}
    \begin{subfigure}[b]{0.10\textwidth}
         \centering
         \includegraphics[width=\textwidth]{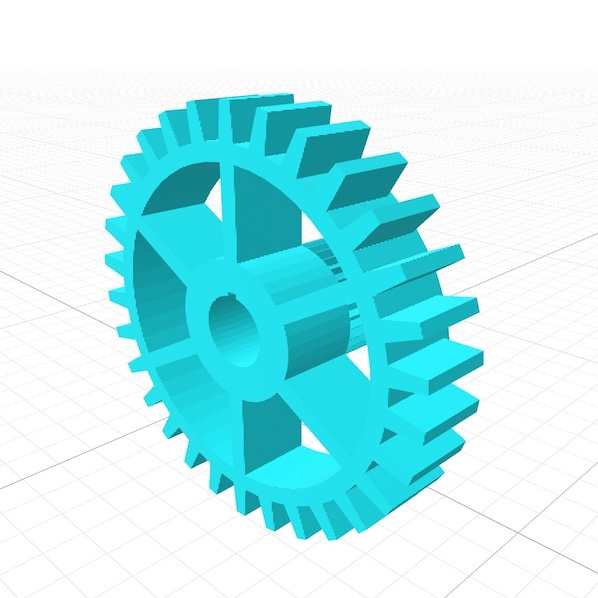}
     \end{subfigure}
    \begin{subfigure}[b]{0.10\textwidth}
         \centering
         \includegraphics[width=\textwidth]{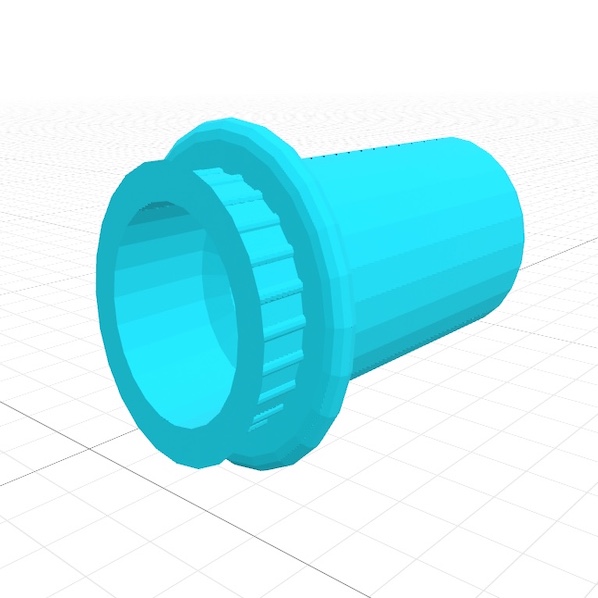}
     \end{subfigure}   
    \begin{subfigure}[b]{0.10\textwidth}
         \centering
         \includegraphics[width=\textwidth]{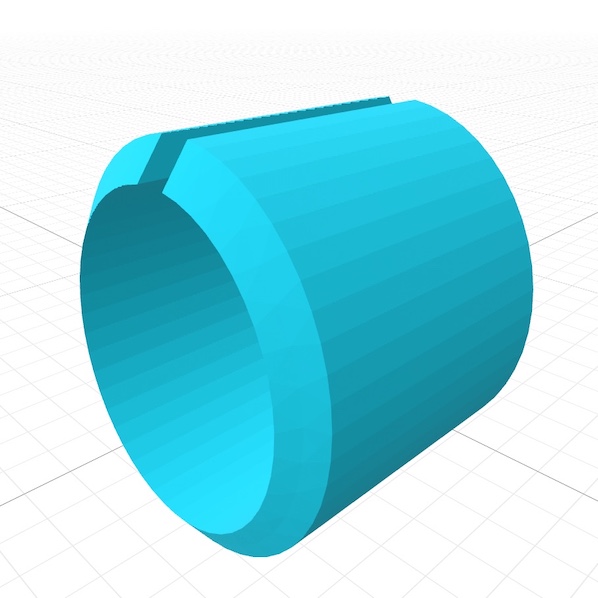}
     \end{subfigure}    
    \begin{subfigure}[b]{0.10\textwidth}
         \centering
         \includegraphics[width=\textwidth]{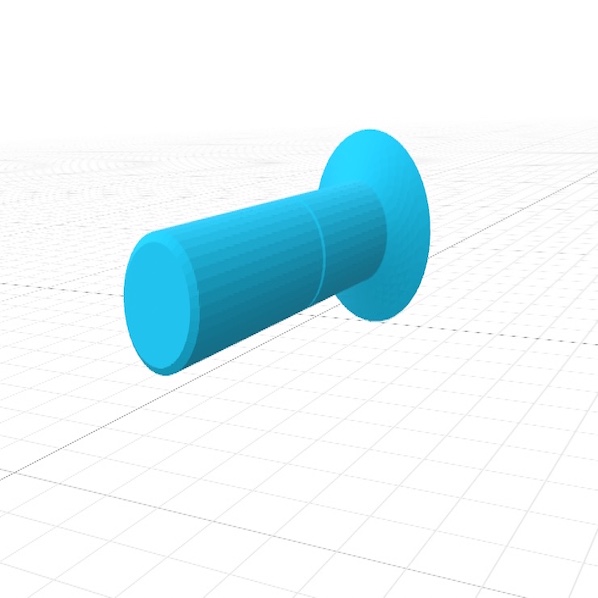}
     \end{subfigure}     
          \begin{subfigure}[b]{0.10\textwidth}
         \centering
         \includegraphics[width=\textwidth]{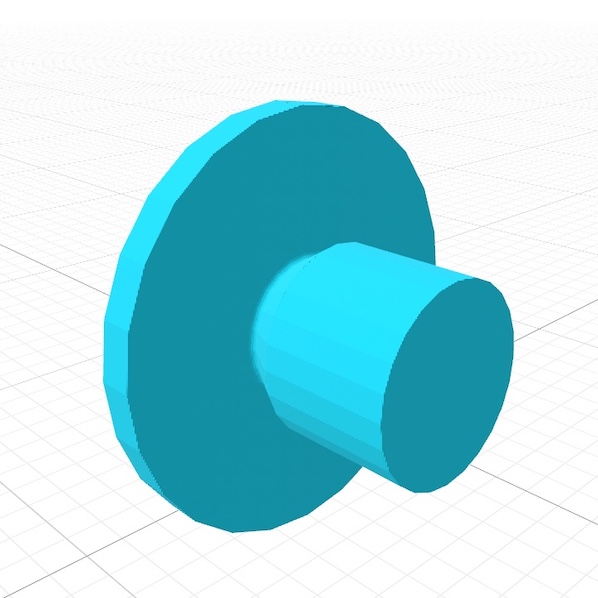}
     \end{subfigure}
     \begin{subfigure}[b]{0.10\textwidth}
         \centering
         \includegraphics[width=\textwidth]{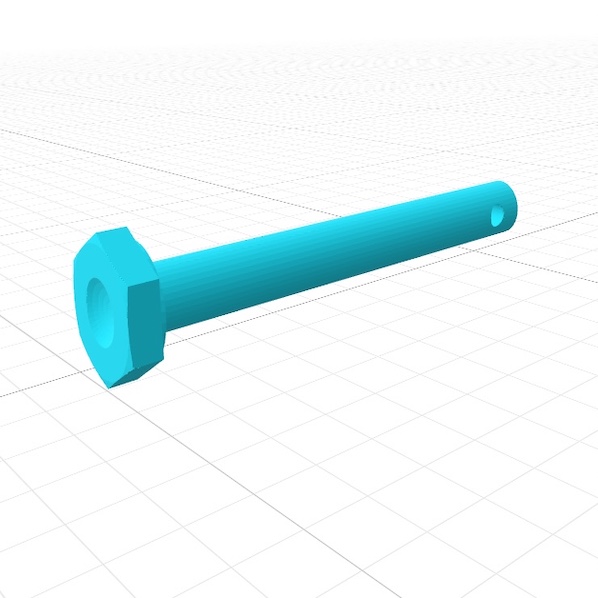}
     \end{subfigure}
     \begin{subfigure}[b]{0.10\textwidth}
         \centering
         \includegraphics[width=\textwidth]{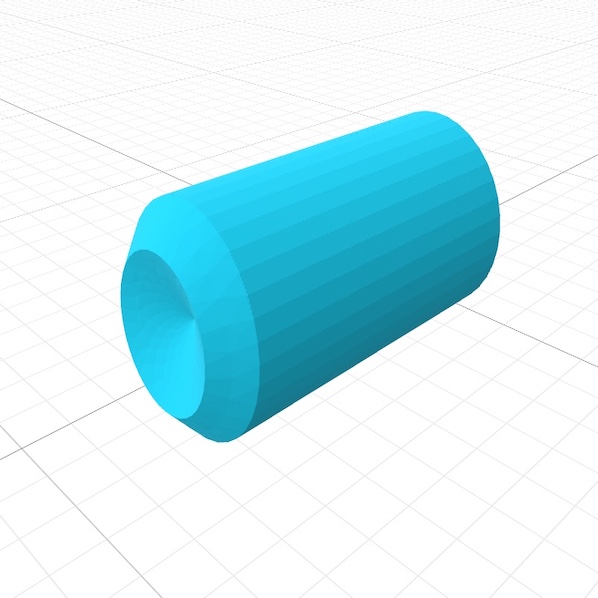}
     \end{subfigure}
     \begin{subfigure}[b]{0.10\textwidth}
         \centering
         \includegraphics[width=\textwidth]{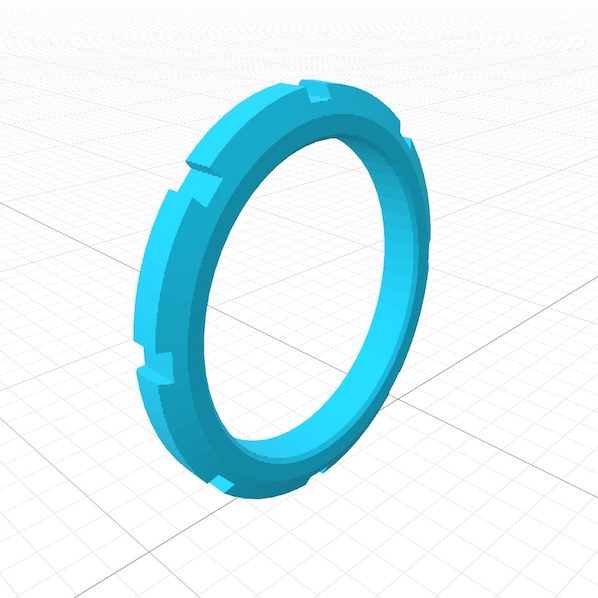}
     \end{subfigure}
     \\
    \begin{subfigure}[b]{0.10\textwidth}
         \centering
         \includegraphics[width=\textwidth]{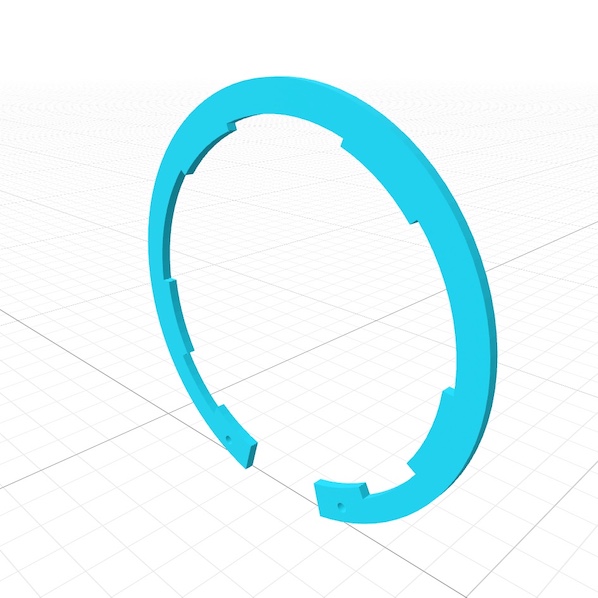}
     \end{subfigure}
    \begin{subfigure}[b]{0.10\textwidth}
         \centering
         \includegraphics[width=\textwidth]{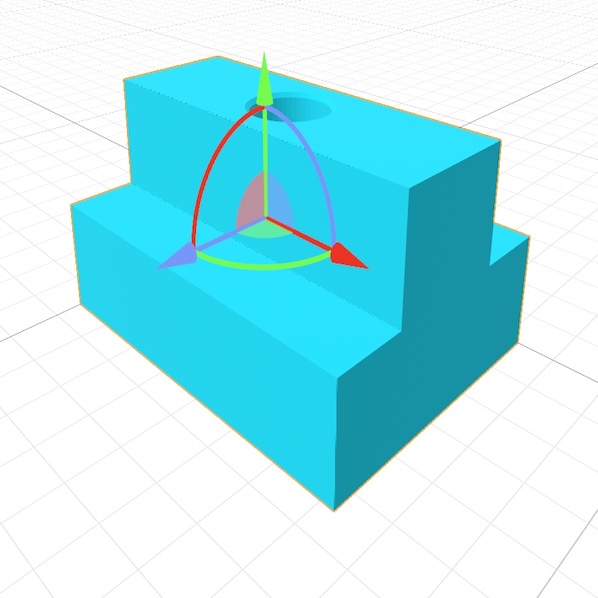}
     \end{subfigure}
    \begin{subfigure}[b]{0.10\textwidth}
         \centering
         \includegraphics[width=\textwidth]{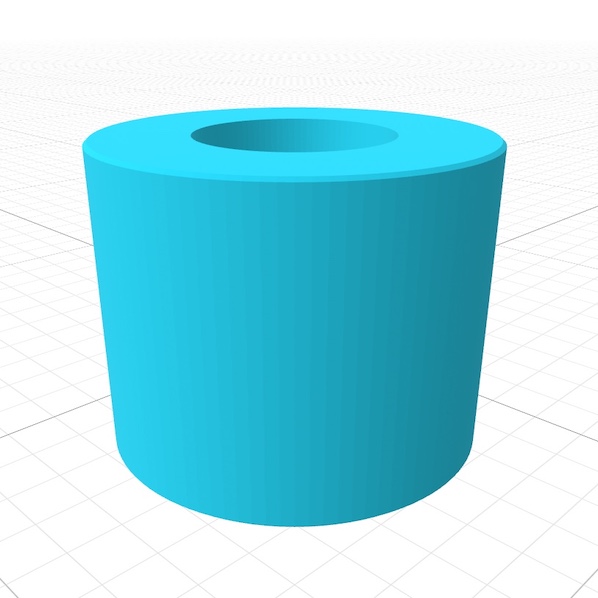}
     \end{subfigure}
    \begin{subfigure}[b]{0.10\textwidth}
         \centering
         \includegraphics[width=\textwidth]{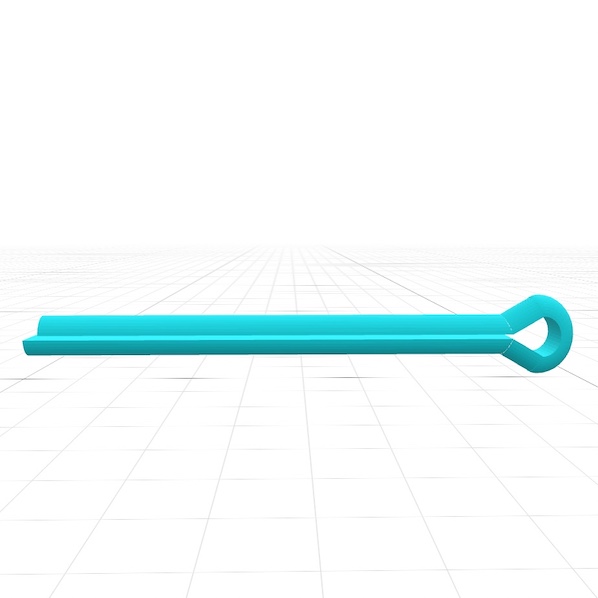}
     \end{subfigure}   
    \begin{subfigure}[b]{0.10\textwidth}
         \centering
         \includegraphics[width=\textwidth]{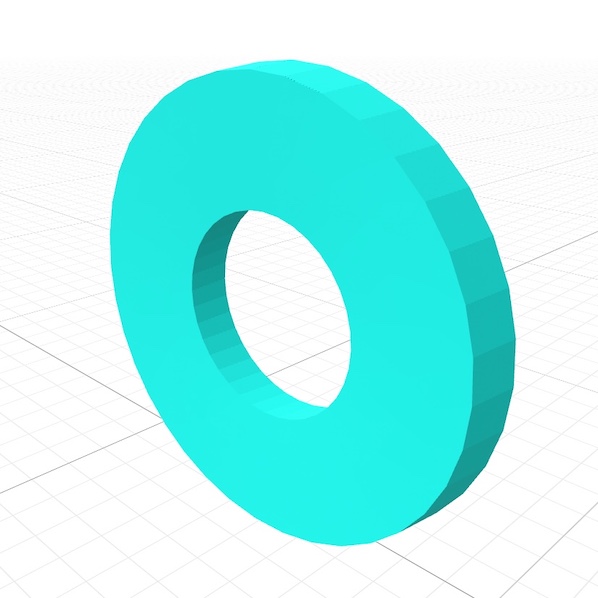}
     \end{subfigure}     
    \begin{subfigure}[b]{0.10\textwidth}
         \centering
         \includegraphics[width=\textwidth]{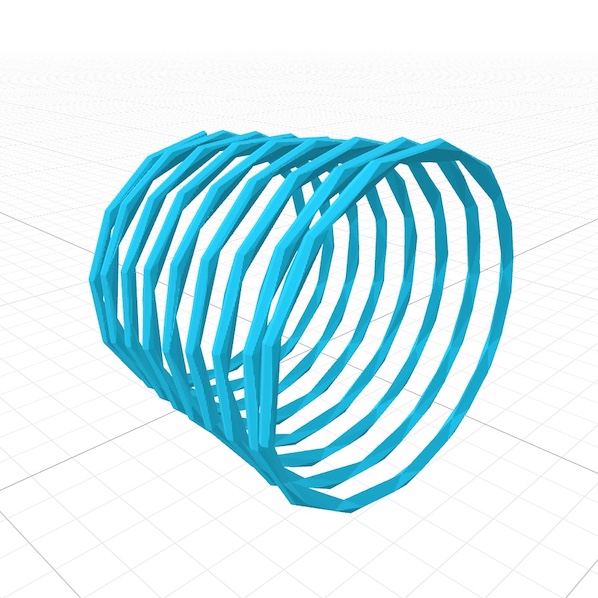}
     \end{subfigure}     
          \begin{subfigure}[b]{0.10\textwidth}
         \centering
         \includegraphics[width=\textwidth]{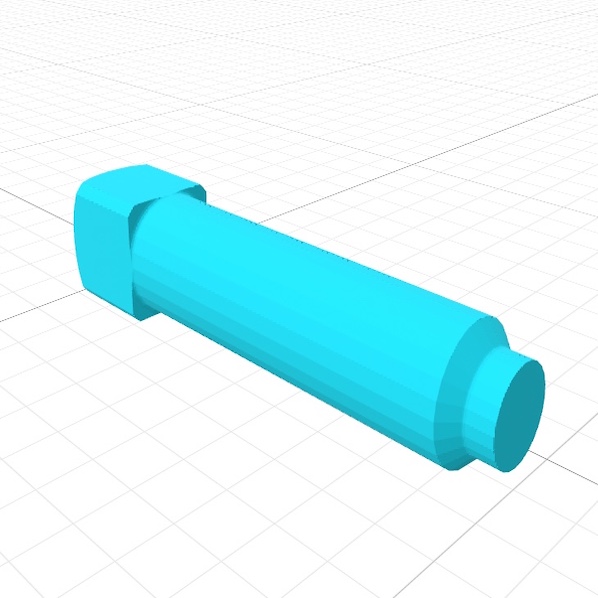}
     \end{subfigure}
     \begin{subfigure}[b]{0.10\textwidth}
         \centering
         \includegraphics[width=\textwidth]{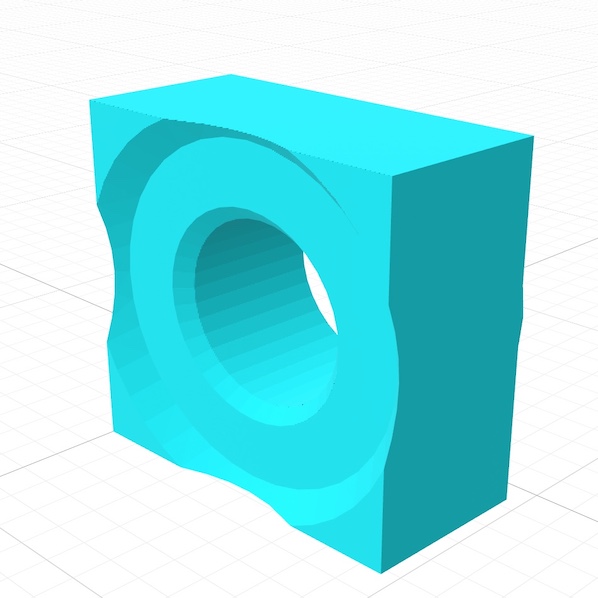}
     \end{subfigure}
     \begin{subfigure}[b]{0.10\textwidth}
         \centering
         \includegraphics[width=\textwidth]{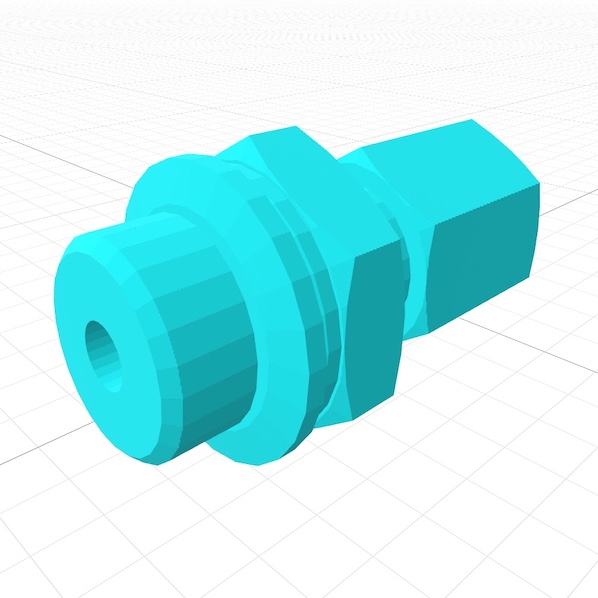}
     \end{subfigure}
     \\
     \begin{subfigure}[b]{0.10\textwidth}
         \centering
         \includegraphics[width=\textwidth]{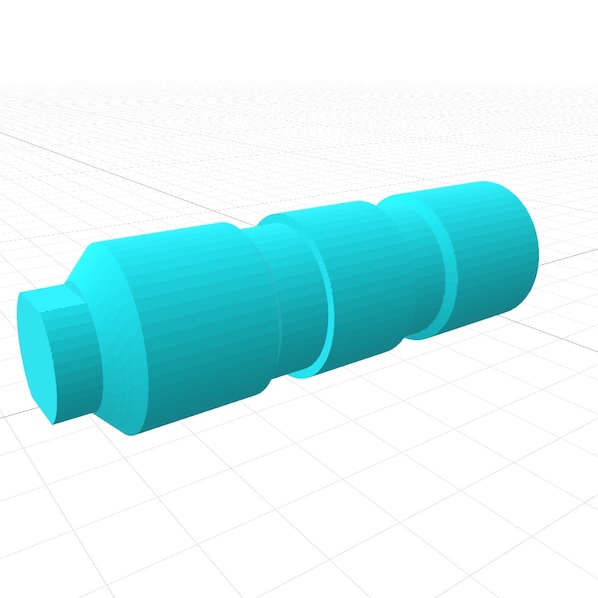}
     \end{subfigure}
    \begin{subfigure}[b]{0.10\textwidth}
         \centering
         \includegraphics[width=\textwidth]{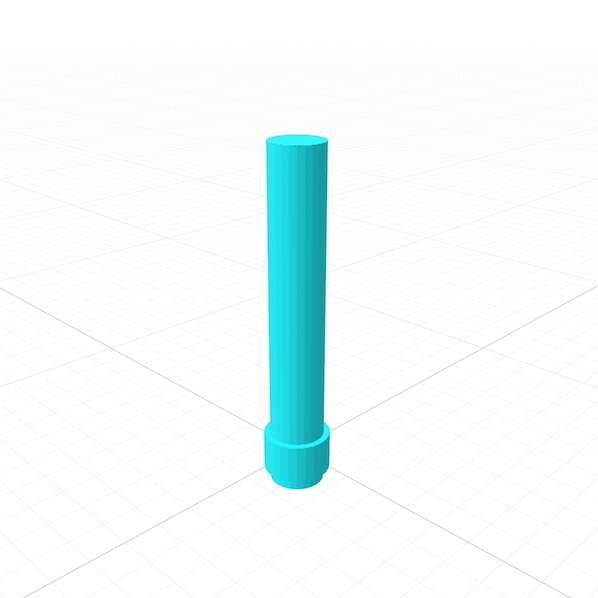}
     \end{subfigure}
    \begin{subfigure}[b]{0.10\textwidth}
         \centering
         \includegraphics[width=\textwidth]{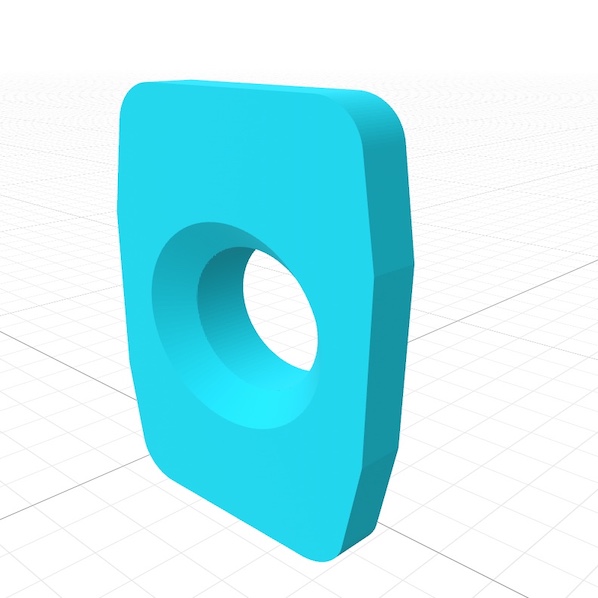}
     \end{subfigure}
    \begin{subfigure}[b]{0.10\textwidth}
         \centering
         \includegraphics[width=\textwidth]{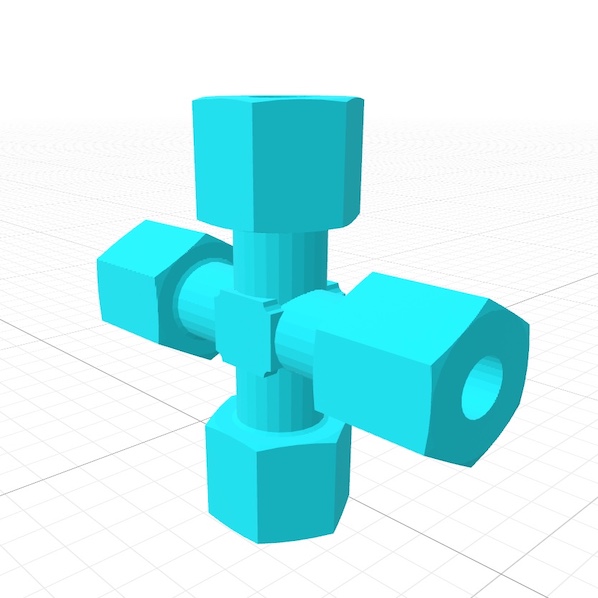}
     \end{subfigure}
    \begin{subfigure}[b]{0.10\textwidth}
         \centering
         \includegraphics[width=\textwidth]{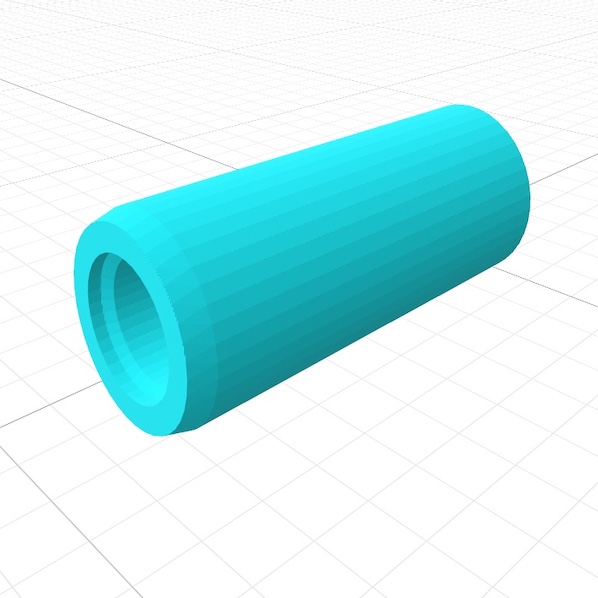}
     \end{subfigure}   
    \begin{subfigure}[b]{0.10\textwidth}
         \centering
         \includegraphics[width=\textwidth]{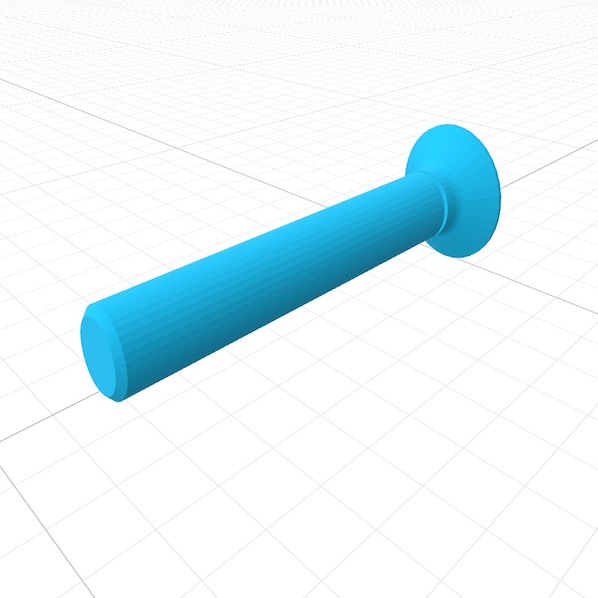}
     \end{subfigure}     
    \begin{subfigure}[b]{0.10\textwidth}
         \centering
         \includegraphics[width=\textwidth]{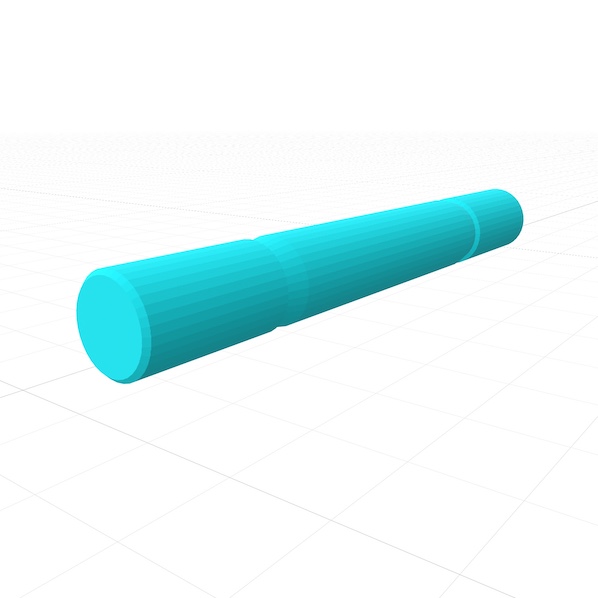}
     \end{subfigure}     
          \begin{subfigure}[b]{0.10\textwidth}
         \centering
         \includegraphics[width=\textwidth]{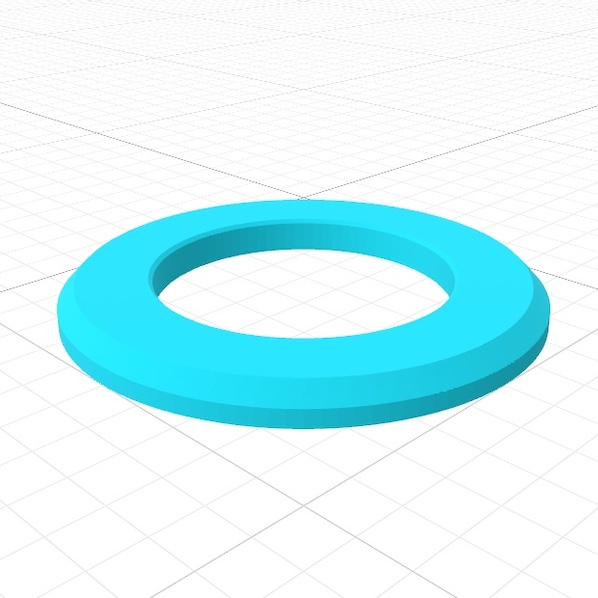}
     \end{subfigure}
     \begin{subfigure}[b]{0.10\textwidth}
         \centering
         \includegraphics[width=\textwidth]{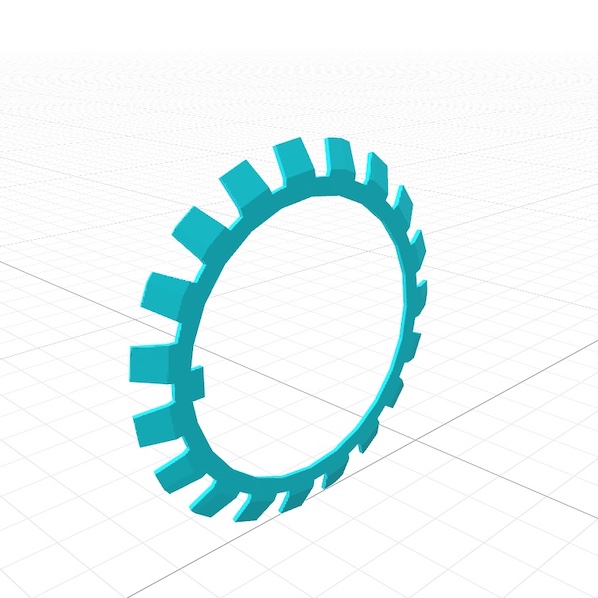}
     \end{subfigure}
     \\
     \begin{subfigure}[b]{0.10\textwidth}
         \centering
         \includegraphics[width=\textwidth]{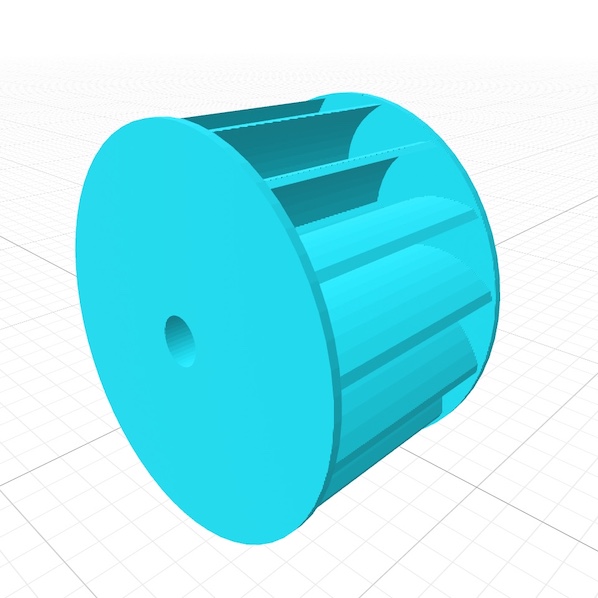}
     \end{subfigure}
     \begin{subfigure}[b]{0.10\textwidth}
         \centering
         \includegraphics[width=\textwidth]{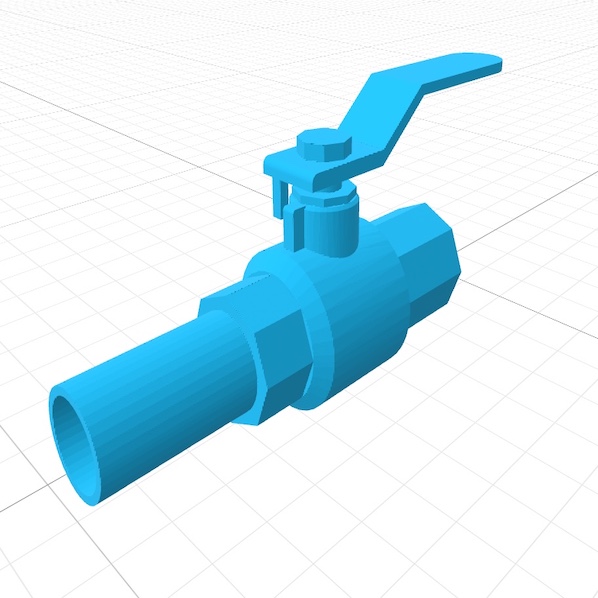}
     \end{subfigure}
    \begin{subfigure}[b]{0.10\textwidth}
         \centering
         \includegraphics[width=\textwidth]{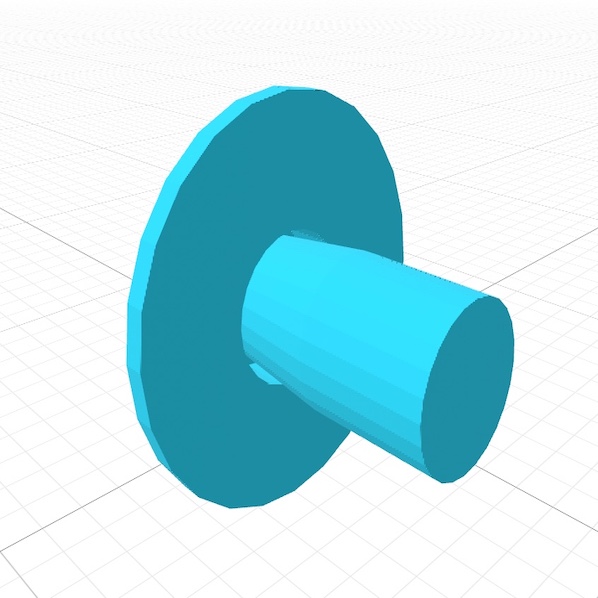}
     \end{subfigure}
    \begin{subfigure}[b]{0.10\textwidth}
         \centering
         \includegraphics[width=\textwidth]{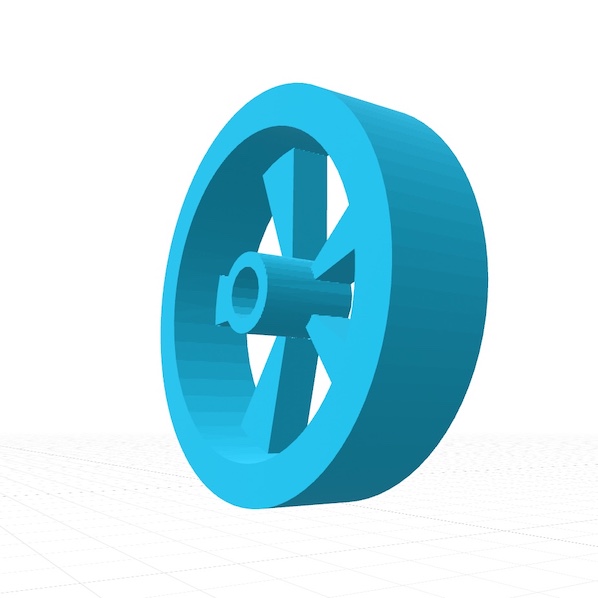}
     \end{subfigure}
    \caption{Shape of industrial components in the dataset}
    \label{fig:all_kinds}
\end{figure}

\subsection{Characteristic of industrial components}

\begin{figure}[H]
    \centering
    \includegraphics[width=50mm,scale=0.5]{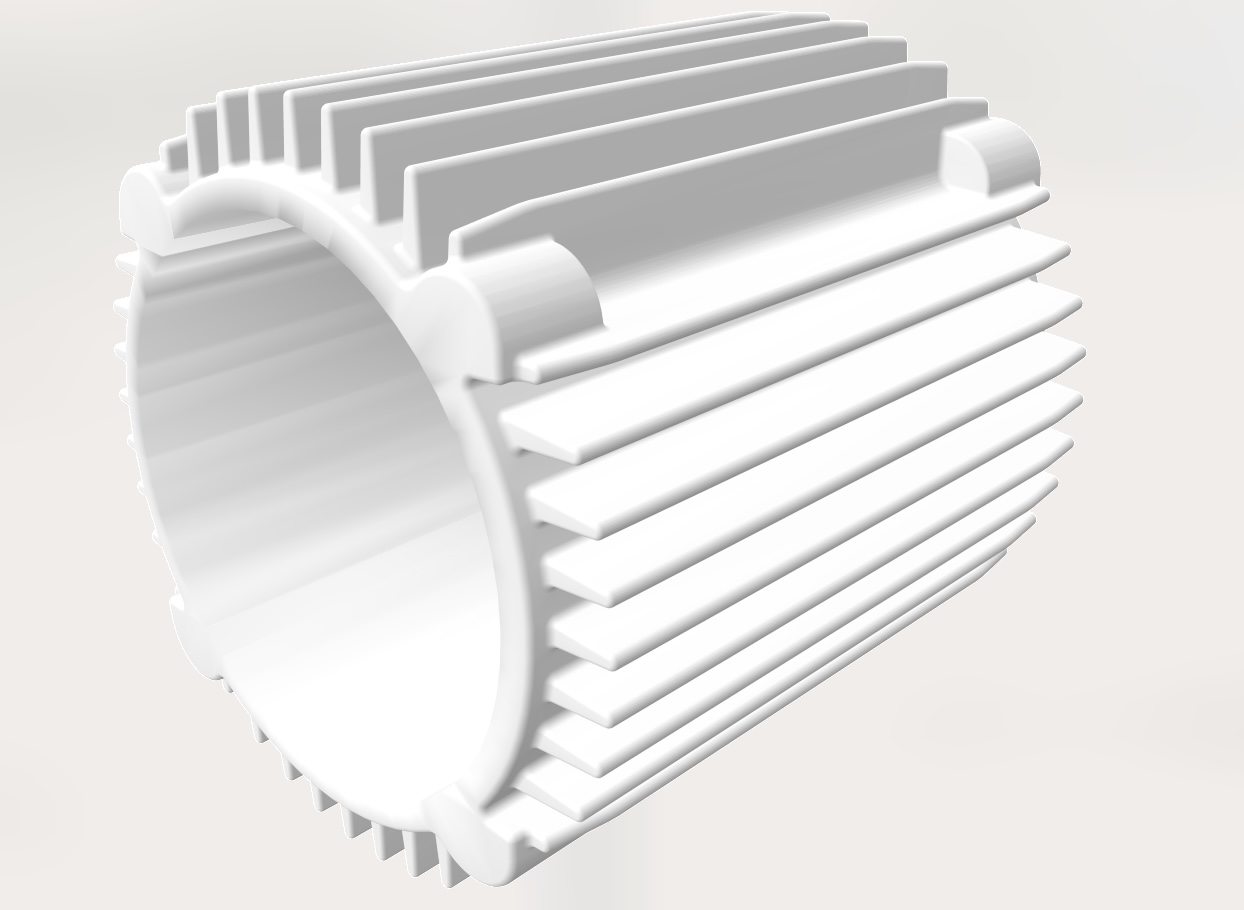}
    \caption{Motor image}
    \label{fig:motor_label}
\end{figure}

From the figure \ref{fig:motor_label} above, we estimate how many parameters a specific object has in our dataset. Notice that although the dataset has different types of mechanical components, the parameters should be roughly the same. From the motor image above, we first measure the inner and outer circle radii. Besides, we also measure the height of the hollow cylinder. The circle's edge is also not completely rigid, so we add another curvature parameter. For the gear of the motor, we notice different kinds of gear on the side. There are two kinds of gear on the motor: one can be approximated as part of a circle and thus has three parameters: the circle's radius, the angle of the arc, and how much of the arc part is shown. Also, some of the gears on the circle can be modeled as a triangle. Therefore, we obtain the number of triangles, the base length, and the triangle's height. In total, there would be around ten parameters for a specific object. To use some abstract variables to capture objects' properties, we propose using a roughly similar number of variables to describe them. Past literature about grasping visual features on 3D objects ~\cite{weinmann2013visual} illustrates the importance of eigenvalues in defining corners in corner detectors and deciding other features of the object. Specifically, after observing the structure of a 3D point cloud of specific mechanical components and look at past literature ~\cite{li2019dl} ~\cite{huang2023weld} ~\cite{weinmann2013feature} ~\cite{li2020geometry}, we decide that including linearity $L$, planarity $P$, sphericity $S$, omni-variance $O$, anisotropy $A$, the sum of eigenvalues $\Sigma$ and curvature $C$ would be necessary. 

We can express the values above by using the eigenvalues of the neighborhood's covariance matrix. The dimension of the covariance matrix would be $3 x 3$, so there would be three eigenvalues. Denote them as $\lambda_1 \ge \lambda_2 \ge \lambda_3 \ge 0$ as the eigenvalues of the covariance matrix are positive. Then we have

\begin{align*}
    L &= \dfrac{\lambda_1 - \lambda_2}{\lambda_1} \le 1\\
    P &= \dfrac{\lambda_2 - \lambda_3}{\lambda_1}\\
    S &= \dfrac{\lambda_3}{\lambda_1} \le 1\\
    A &= \dfrac{\lambda_1 - \lambda_3}{\lambda_1}\le 1\\
    \Sigma &= \lambda_1 + \lambda_2 + \lambda_3 \\
    C &= \dfrac{\lambda_3}{\lambda_1 + \lambda_2 + \lambda_3}\\
    O &= \sqrt[3]{\lambda_1 * \lambda_2 * \lambda_3}
\end{align*}

For instance, for the figure of the point cloud data of one motor above, we should depict the sphericity of the motor and the curvature of the curves.

\subsection{Data processing}
\label{sub:data_processing}

We use the blender python module to process the dataset \url{https://pypi.org/project/bpy/}. 
The reason behind processing the dataset before training is that there will be a different number of points. To make the number of points uniform for the model to train without changing the structure, we either add points to the 
3d object or take a subset of the points out from the object. We decide to use 1024 points of the object. The latter is easier to do as we can shuffle the points and take the first 1024 points. For the first one, we use the following algorithm: we first get the list of surfaces as triangles from the blender and then calculate the number of points we add. Suppose now we have $n$ points in the objects, and there are $t$ triangles. Then, we add $\dfrac{1024 - n}{t}$ on each triangle. If $t$ does not divide $1024 - n$, we can add more points on each triangle and sample points from the object to take points out. The main idea of our preprocessing is to ensure the object has enough points to get sent into the model while making sure the object differs a little from the original object.

\section{Methodology}

In this section, we discuss the structure of the model. We propose an approach that gives a geometric embedding to the points first and then uses hierarchies of graph-like convolution that preserve invariance to process the points.

\subsection{Geometric pre-processing}

As discussed in section \ref{sub:data_processing}, we captured the core geometric features of an object by calculating around ten variables using the covariance matrix of each neighborhood. The reason for calculating the variables only in each neighborhood is to capture the local feature of the model. There are many ways to sample the neighborhood, led by the method of farthest point sampling ~\cite{UCAM-CL-TR-562} in a 3D metric space with euclidean distance with upper bound radius $r$. In the neighborhood, the center is point $c$, the neighbor points $p_i \in N(c)$, and we strive to use geometric features to get a local pattern representation $f$ for this neighborhood. Each point $p_i$ has its feature, and we try to process that first and then do aggregation:

\[
   f = A(F(f_{p_i}), \forall p_i)
\]

where $f_{p_i}$ is the feature of point $p_i$. In the scope of this paper, $p_i$ is the 3D coordinates of the point $p_i$ expressed in a vector.

We concatenate all the variables into a vector $G_v$, which is:

\[
    G_v = [L, P, S, A, \Sigma, C, O]
\]

After calculating the geometric features, we concatenate the feature vector with the original vector, which results in

\[
    F(f_{p_i}) = [\text{MLP}(G_v), f_{p_i}]
\]


\subsection{Graph Neural network}

Considering the properties of the points in the point cloud, we treat the whole point cloud as a Graph Neural Network (GNN). To set up, we first compute a directed graph $G = (V, E)$, representing the cloud structure. There are many ways to compute the graph out of the given vertices, including the most basic one: computer a k-nearest-neighbor (k-NN) graph, where k is a hyperparameter to be tuned for the vertices. In each neighborhood, we connect pairs of vertices inside it. Therefore, a vertex will get connected to itself, and such self-loops in the graph are fine. We denote the set of vertices as $V$ and edges as $E \subseteq V \times V$. 

Now we define the edge feature for this graph, as this defines the important relationship between points in the point cloud. For two points $p_i, p_j$ in the point cloud, we use a nonlinear function with learnable parameters, $f$, to get the edge feature. In order words, the edge feature $e_{ij}$ between points $p_i, p_j$ is equal to 

\[
e_{ij} = f(p_i, p_j)
\].

Then, we define edge convolution, which aggregates the edge features with all edges from a vertex. The equation is just applying an aggregation function on it, i.e., the feature $x_i$ for point $i$ is

\[
    x_i' = g_{j: (i, j) \in E} f(p_i, p_j)
\]

There are several choices for $g$ and $f$. However, considering we want to keep the function simple while capturing the global and local information of the point cloud. To express the global information, we use the point's value, $p_i$. For the local information, on the other hand, we use the difference between the values of points, $p_i - p_j$, to express the local information. Adding in the aggregation function, we have the edge convolution has the form:

\[
x_im' = \max_{j: (i, j) \in E} \text{ReLU} (\theta_{1_m} \cdot (p_j - p_i) + \theta_{2_m} \cdot x_i)
\]

\subsection{Dynamic updating}

We treat the edge features as a graph to continue applying edge convolution to increase the receptive field's size. Specifically, suppose at layer $i$ we have the $k$ points in the neighbourhood around point $p_x$ are $p_{c1}, p_{c2}, \cdots, p_{ck}$. Then in layer $i+1$, we have the corresponding edges are 

\[
(i, c1), (i, c2), (i, c3), \cdots, (i, ck)
\].

In other words, we treat the new edge features computed as points in the new graph. The reason why it would increase the receptive field is that the distance of "points" are closer and closer when the layer progresses: the "points" in a new layer are the edges from the previous layer, and points that are not connected in the previous layer might be related since features involving them will be connected in the new layer. As a result, the size of the receptive field for this model might ideally extend to the diameter of the point cloud, i.e., the maximum distance between points in the point cloud.

\subsection{Proof of invariances}

This section proves that the function we have found before can process point clouds. That is to say, the function we have picked before to process the data at each layer would satisfy \textbf{Permutation Invariance}  and \textbf{Translation Invariance}.

\begin{theorem}
    The output for every layer, 

    \[
       f_i' = \max_{j: (i, j) \in E} h_{\theta} (x_i, x_j) 
    \]

    is permutation invariant. 

    That is to say, given a set of points $p_1, p_2, \cdots, p_n$, after rearranging it into $p_1', p_2', \cdots, p_n'$, the result would still be the same.
\end{theorem}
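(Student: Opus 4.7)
The plan is to show that permuting the input indices leaves the layer's output unchanged as an unordered collection of feature vectors, because (a) the neighborhood graph $E$ is built from geometric information and hence carries over under the relabeling, and (b) the $\max$ aggregation is symmetric in its arguments. Concretely, I would fix any permutation $\pi$ of $\{1, \ldots, n\}$, let $p'_i = p_{\pi(i)}$ denote the reindexed cloud, and let $E'$ denote the neighborhood graph built on $\{p'_i\}$.

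First I would argue that $E$ is determined by the point positions rather than by the labels: since $E$ is constructed as a $k$-nearest-neighbor graph in Euclidean space, it depends only on pairwise distances, so $(i,j) \in E$ if and only if $(\pi^{-1}(i), \pi^{-1}(j)) \in E'$. In particular, the set of neighbor points of a physical point is the same in both indexings, only its index labels change.

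Next I would invoke the core observation that $\max$ is a symmetric function of a multiset, not a sequence. Using the index correspondence from the previous step,
\[
    \max_{j:(\pi^{-1}(i),j) \in E'} h_\theta(x'_{\pi^{-1}(i)}, x'_j) \;=\; \max_{j:(i,j)\in E} h_\theta(x_i, x_j),
\]
since both maxima are taken over the same multiset of values $\{ h_\theta(x_i, x_j) : j \in N(i)\}$. Instantiating $h_\theta$ as the ReLU-MLP form from the previous subsection does not change this argument, because $h_\theta$ is applied point-wise before aggregation. This yields $f'_{\pi^{-1}(i)}$ on the permuted input $= f'_i$ on the original input, which is exactly the equivariant form of permutation invariance that a point-cloud layer should satisfy.

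The main subtlety — rather than a genuine obstacle — will be making precise what ``the result would still be the same'' means: the per-point output vectors are permuted along with the inputs, so layerwise the map is permutation-equivariant, and full invariance of the final classification only emerges after a symmetric global readout over all $f'_i$. I would flag this distinction in a single sentence, then note that since every layer of the network has this form, invariance of the readout follows by composition. No heavy calculation is required; the whole argument rests on the symmetry of $\max$ and the geometric (label-free) construction of $E$.
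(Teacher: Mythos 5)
Your proposal is correct and rests on the same key fact the paper uses: the paper's entire proof is the single sentence that ``the permutation invariance is achieved since the max function is symmetric.'' Your version is strictly more careful than the paper's --- in particular, your observations that the $k$-NN graph $E$ is determined by geometry rather than by labels, and that the layer is really permutation-\emph{equivariant} with true invariance only arriving at the symmetric global readout, are genuine gaps in the paper's one-line argument that you have correctly identified and filled.
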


\begin{proof}
    The permutation invariance is achieved since the max function is symmetric. 
\end{proof}

For translation invariance, suppose the points are shifted by adding a constant $c$. Then we compute the edge feature between $x_i$ and $x_j$ with both terms replaced with $x_i + c$, $x_j + c$ respectively:

\begin{align*}
   e_mij'  &= \text{ReLU}(\theta_{1_m} \cdot ((x_j + c) - (x_i + c)+\theta_{2_m} \cdot (x_i + c))  \\
           &= \text{ReLU} (\theta_{1_m} \cdot (x_j - x_i) + \theta_{2_m} \cdot (x_i + c)
\end{align*}

Therefore, the first term in the equation above inside the ReLU function has translation invariance, while the second term does not. Therefore, our model has partial translation invariance. If the model achieves full translation invariance, the trainable function $\theta_{2_m}$ above is zero. However, it means our model would depend only on the relative positions of points represented by the term $x_j - x_i$ and ignore the true position of $x_i$. As a result, the model would only process the input of an unordered set of parts of the object and not its orientation and positions. Since the parameters for $x_i + c$ are small enough, it would not propagate much after layers of computation.

\subsection{Performance of the model}

This section discusses the model's performance based on visualizations such as the confusion matrix and AUC curve. 

We first discuss the confusion matrix and our model's performance on it. The confusion matrix shows how well our model performs in supervised learning, specifically in classifying the geometrical objects, in this case, inside the confusion matrix, which is a big square matrix whose number of columns (or rows) matches the number of types for classification. Each row of the matrix represents an instance of an actual class, while each column represents an instance of the predicted class. The darker the color in a grid, the more instances there are. The ideal performance of a model would be that the diagonal of the matrix would be very dark, meaning there are fewer false positive or false negative samples.

To generate the confusion matrix on our dataset, we calculate the prediction on each label and normalize it by dividing the prediction by the total number of predictions. The confusion matrix looks like the one below, using grayscale as color.

\begin{figure}[H]
    \centering
    \includegraphics[width=0.8\textwidth]{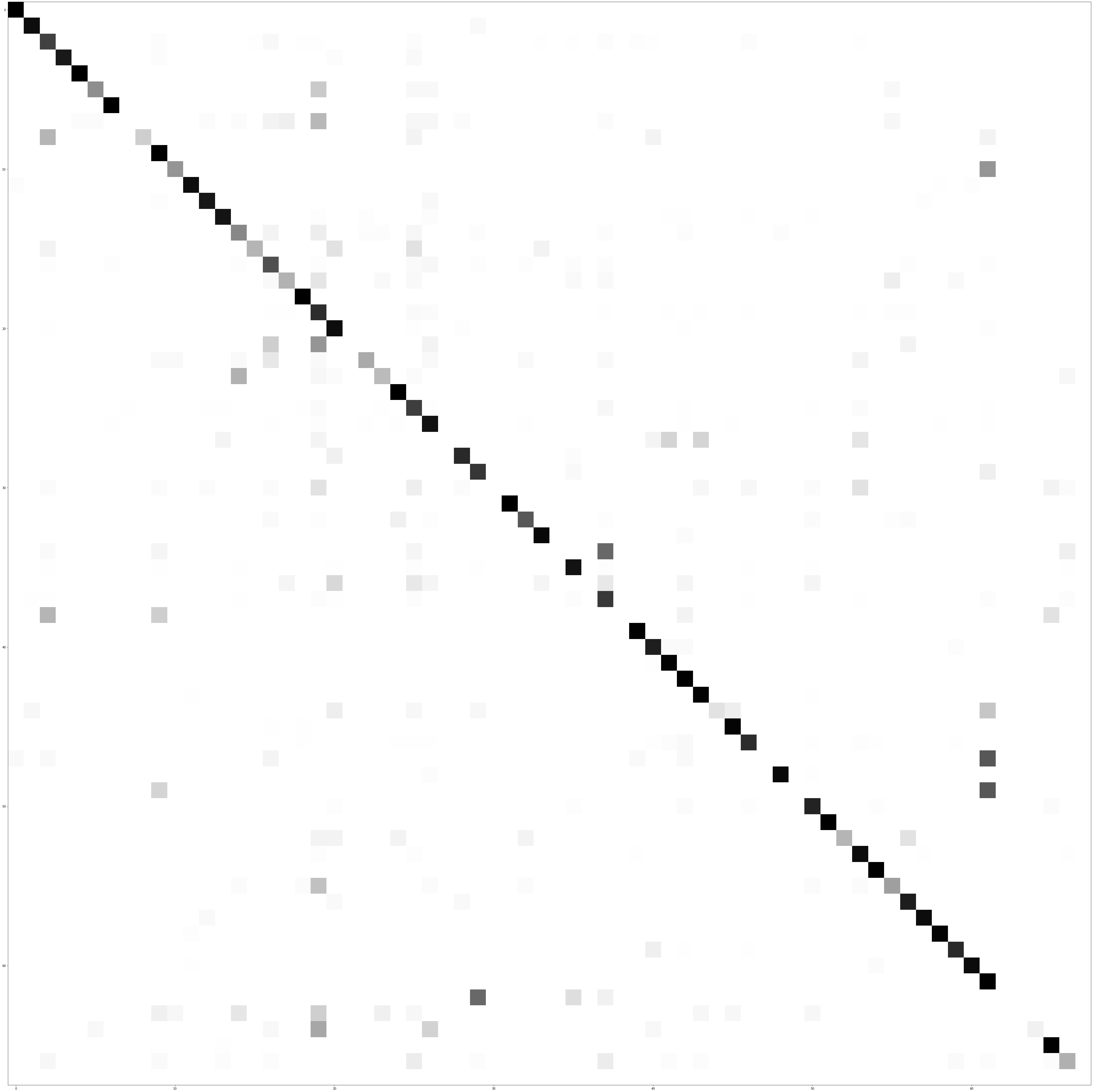}
    \caption{Confusion matrix for our model}
    \label{fig:confusion_matrix}
\end{figure}

As seen in the confusion matrix diagram above, there is a pattern consisting of self-explanatory dark color blocks on the diagonal, which means the model achieves a high true positive rate for most types. The color of blocks for other regions in the matrix is not as dark as the ones on the diagonal, which means our model also achieves a low false positive/false negative rate on most labels.

Then, we also show the strength of our model by displaying the ROC (receiver operating characteristic) curve to illustrate how well our model is in diagnosing true types from false types. Specifically, in evaluating our model, we generate a plot for each possible label in our dataset. We aggregate two data types for each label $i$: the expected actual label is $i$, or the predicted label for the object is $i$. The former is for true positive and false positive, and the latter is for true negative and false negative. We generate a ROC curve for each of the labels, which can be seen in the figures in the following pages:

\begin{figure}[H]
     \centering
      \begin{subfigure}[b]{0.33\textwidth}
         \centering
         \includegraphics[width=\textwidth]{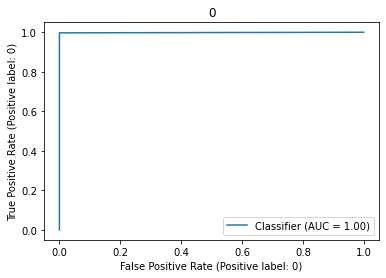}
     \end{subfigure}
     \begin{subfigure}[b]{0.33\textwidth}
         \centering
         \includegraphics[width=\textwidth]{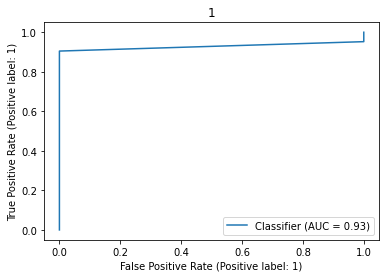}
     \end{subfigure}
     \begin{subfigure}[b]{0.33\textwidth}
         \centering
         \includegraphics[width=\textwidth]{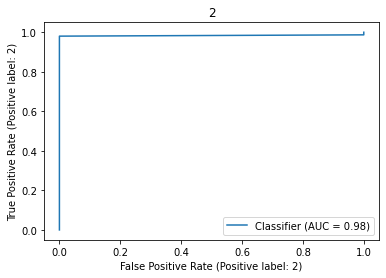}
     \end{subfigure}
     \\
     \begin{subfigure}[b]{0.33\textwidth}
         \centering
         \includegraphics[width=\textwidth]{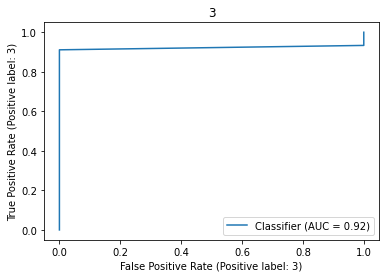}
     \end{subfigure}
     \begin{subfigure}[b]{0.33\textwidth}
         \centering
         \includegraphics[width=\textwidth]{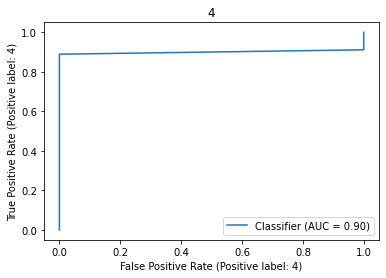}
     \end{subfigure}
    \begin{subfigure}[b]{0.33\textwidth}
         \centering
         \includegraphics[width=\textwidth]{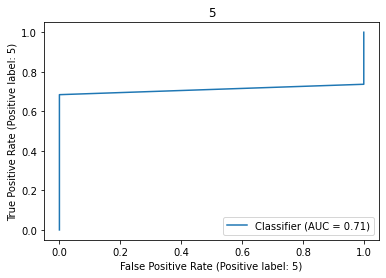}
     \end{subfigure}
     \\
    \begin{subfigure}[b]{0.33\textwidth}
         \centering
         \includegraphics[width=\textwidth]{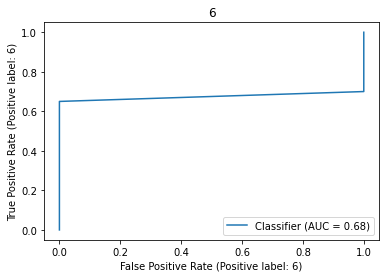}
     \end{subfigure}
    \begin{subfigure}[b]{0.33\textwidth}
         \centering
         \includegraphics[width=\textwidth]{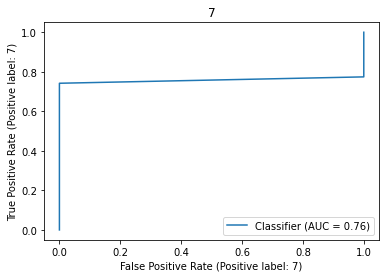}
     \end{subfigure}
    \begin{subfigure}[b]{0.33\textwidth}
         \centering
         \includegraphics[width=\textwidth]{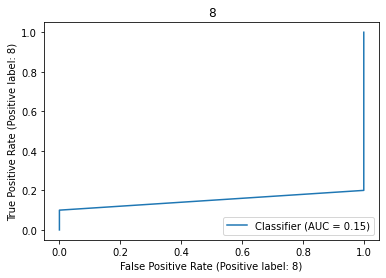}
     \end{subfigure}   
     \\
    \begin{subfigure}[b]{0.33\textwidth}
         \centering
         \includegraphics[width=\textwidth]{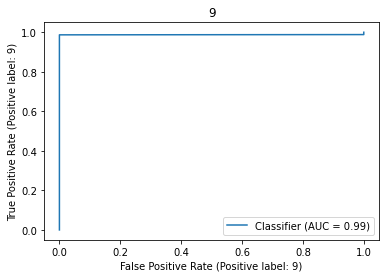}
     \end{subfigure}     
    \begin{subfigure}[b]{0.33\textwidth}
         \centering
         \includegraphics[width=\textwidth]{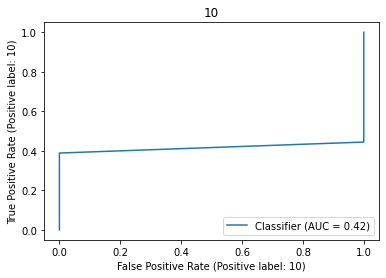}
     \end{subfigure}     
          \begin{subfigure}[b]{0.33\textwidth}
         \centering
         \includegraphics[width=\textwidth]{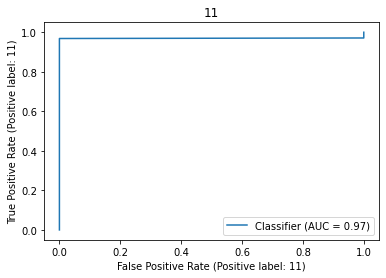}
     \end{subfigure}
     \\
     \begin{subfigure}[b]{0.33\textwidth}
         \centering
         \includegraphics[width=\textwidth]{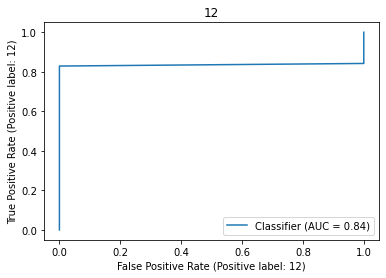}
     \end{subfigure}
     \begin{subfigure}[b]{0.33\textwidth}
         \centering
         \includegraphics[width=\textwidth]{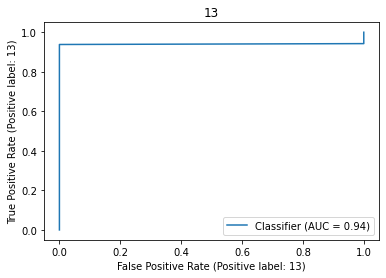}
     \end{subfigure}
     \begin{subfigure}[b]{0.33\textwidth}
         \centering
         \includegraphics[width=\textwidth]{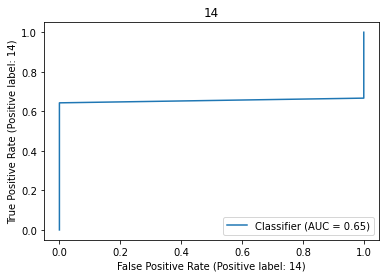}
     \end{subfigure}
    \\
    \begin{subfigure}[b]{0.33\textwidth}
         \centering
         \includegraphics[width=\textwidth]{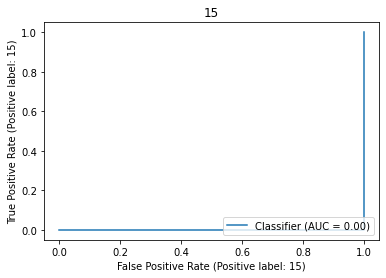}
     \end{subfigure}
    \begin{subfigure}[b]{0.33\textwidth}
         \centering
         \includegraphics[width=\textwidth]{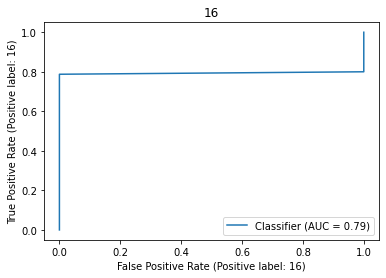}
     \end{subfigure}
    \begin{subfigure}[b]{0.33\textwidth}
         \centering
         \includegraphics[width=\textwidth]{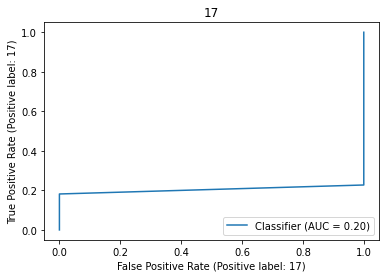}
     \end{subfigure}
     \caption{Figure of ROC for all the categories prediction}
    \label{fig:all_kinds_ROC}
\end{figure}

\begin{figure}[H]
    \ContinuedFloat
    \begin{subfigure}[b]{0.33\textwidth}
         \centering
         \includegraphics[width=\textwidth]{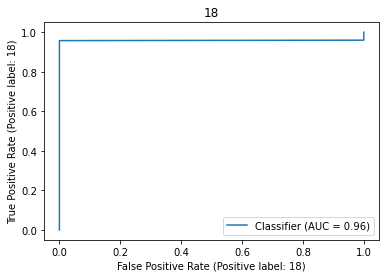}
     \end{subfigure}   
    \begin{subfigure}[b]{0.33\textwidth}
         \centering
         \includegraphics[width=\textwidth]{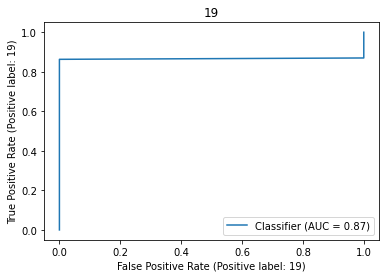}
     \end{subfigure}     
    \begin{subfigure}[b]{0.33\textwidth}
         \centering
         \includegraphics[width=\textwidth]{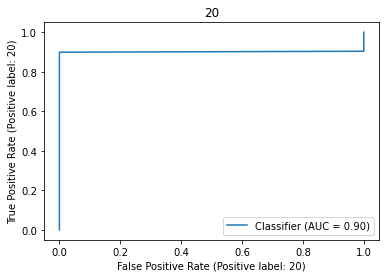}
     \end{subfigure}     
    //
          \begin{subfigure}[b]{0.33\textwidth}
         \centering
         \includegraphics[width=\textwidth]{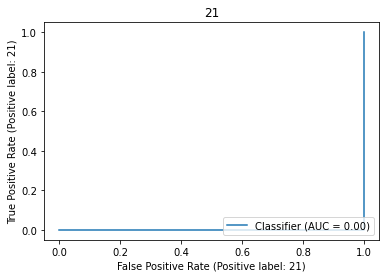}
     \end{subfigure}
     \begin{subfigure}[b]{0.33\textwidth}
         \centering
         \includegraphics[width=\textwidth]{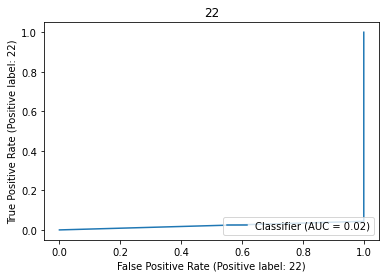}
     \end{subfigure}
     \begin{subfigure}[b]{0.33\textwidth}
         \centering
         \includegraphics[width=\textwidth]{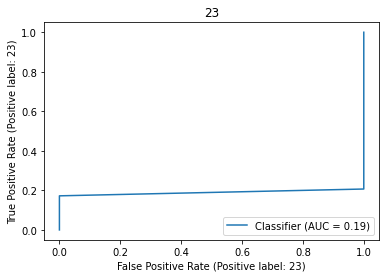}
     \end{subfigure}
     \\
     \begin{subfigure}[b]{0.33\textwidth}
         \centering
         \includegraphics[width=\textwidth]{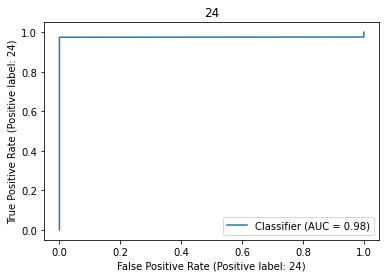}
     \end{subfigure}
    \begin{subfigure}[b]{0.33\textwidth}
         \centering
         \includegraphics[width=\textwidth]{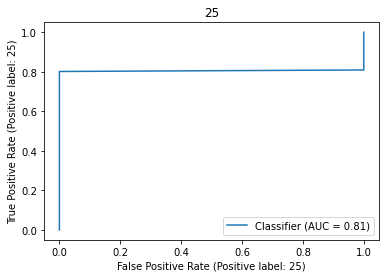}
     \end{subfigure}
    \begin{subfigure}[b]{0.33\textwidth}
         \centering
         \includegraphics[width=\textwidth]{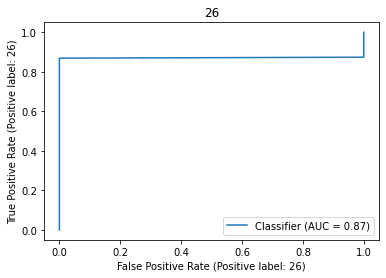}
     \end{subfigure}
     \\
    \begin{subfigure}[b]{0.33\textwidth}
         \centering
         \includegraphics[width=\textwidth]{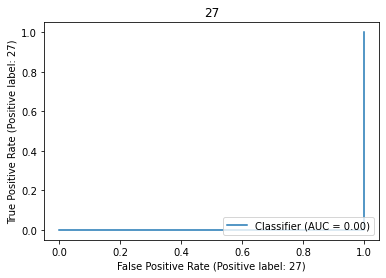}
     \end{subfigure}
    \begin{subfigure}[b]{0.33\textwidth}
         \centering
         \includegraphics[width=\textwidth]{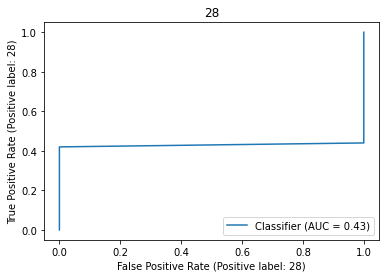}
     \end{subfigure}   
    \begin{subfigure}[b]{0.33\textwidth}
         \centering
         \includegraphics[width=\textwidth]{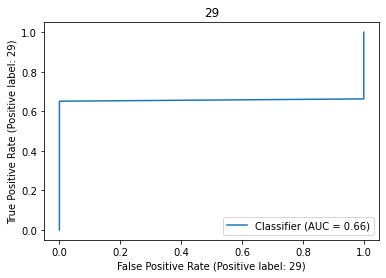}
     \end{subfigure}  
     \\
    \begin{subfigure}[b]{0.33\textwidth}
         \centering
         \includegraphics[width=\textwidth]{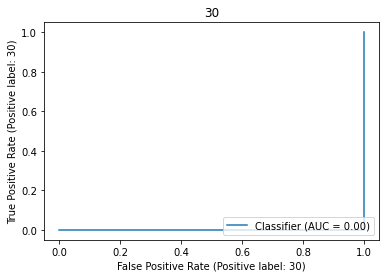}
     \end{subfigure}     
          \begin{subfigure}[b]{0.33\textwidth}
         \centering
         \includegraphics[width=\textwidth]{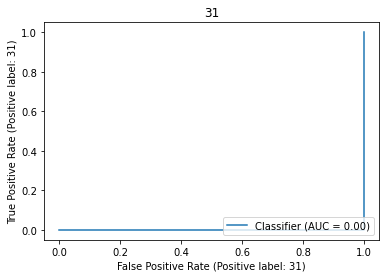}
     \end{subfigure}
     \begin{subfigure}[b]{0.33\textwidth}
         \centering
         \includegraphics[width=\textwidth]{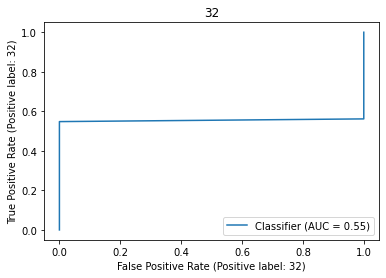}
     \end{subfigure}
     \\
     \begin{subfigure}[b]{0.33\textwidth}
         \centering
         \includegraphics[width=\textwidth]{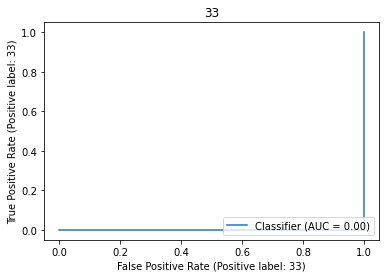}
     \end{subfigure}
     \begin{subfigure}[b]{0.33\textwidth}
         \centering
         \includegraphics[width=\textwidth]{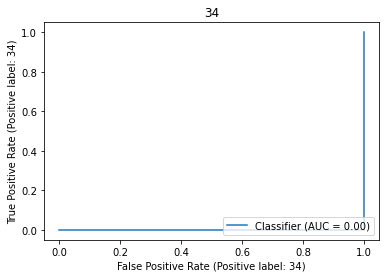}
     \end{subfigure}
    \begin{subfigure}[b]{0.33\textwidth}
         \centering
         \includegraphics[width=\textwidth]{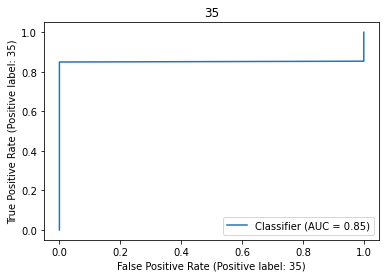}
     \end{subfigure}
     \\
    \caption{Figure of ROC for all the categories prediction (continued)}
    \label{fig:all_kinds_ROC}
\end{figure}

\begin{figure}[H]
    \centering
    \begin{subfigure}[b]{0.33\textwidth}
         \centering
         \includegraphics[width=\textwidth]{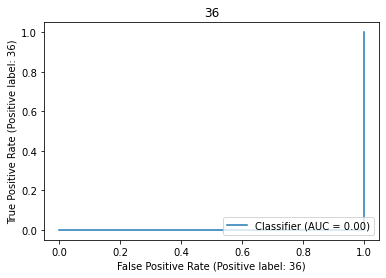}
     \end{subfigure}
    \begin{subfigure}[b]{0.33\textwidth}
         \centering
         \includegraphics[width=\textwidth]{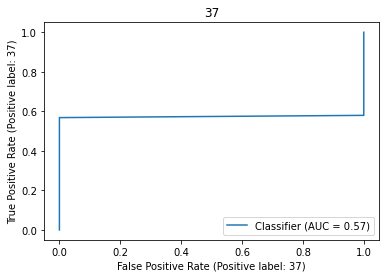}
     \end{subfigure}
    \begin{subfigure}[b]{0.33\textwidth}
         \centering
         \includegraphics[width=\textwidth]{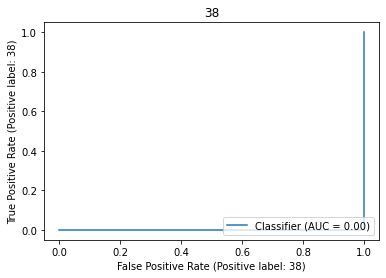}
     \end{subfigure}  
     \\
    \begin{subfigure}[b]{0.33\textwidth}
         \centering
         \includegraphics[width=\textwidth]{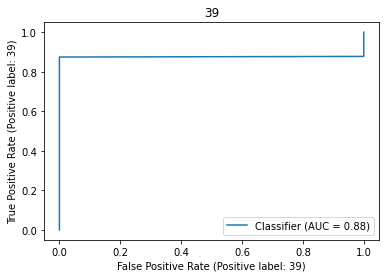}
     \end{subfigure}    
    \begin{subfigure}[b]{0.33\textwidth}
         \centering
         \includegraphics[width=\textwidth]{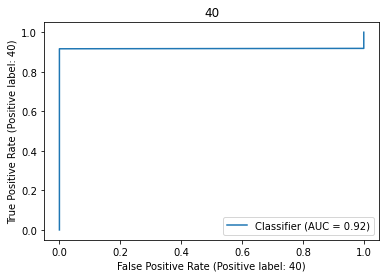}
     \end{subfigure}     
          \begin{subfigure}[b]{0.33\textwidth}
         \centering
         \includegraphics[width=\textwidth]{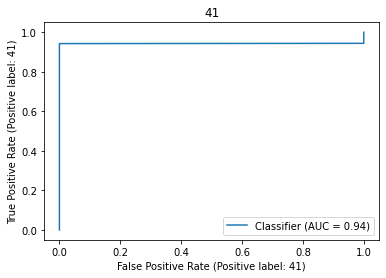}
     \end{subfigure}
     \\
     \begin{subfigure}[b]{0.33\textwidth}
         \centering
         \includegraphics[width=\textwidth]{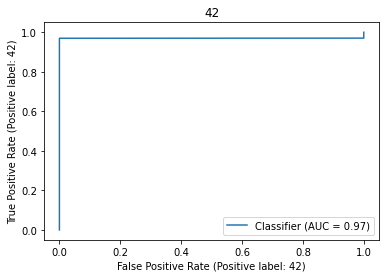}
     \end{subfigure}
     \begin{subfigure}[b]{0.33\textwidth}
         \centering
         \includegraphics[width=\textwidth]{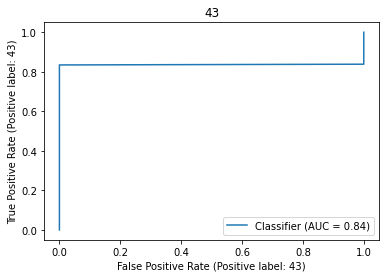}
     \end{subfigure}
     \begin{subfigure}[b]{0.33\textwidth}
         \centering
         \includegraphics[width=\textwidth]{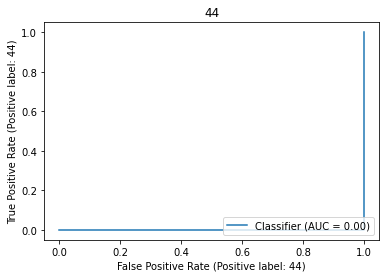}
     \end{subfigure}
     \\
    \begin{subfigure}[b]{0.33\textwidth}
         \centering
         \includegraphics[width=\textwidth]{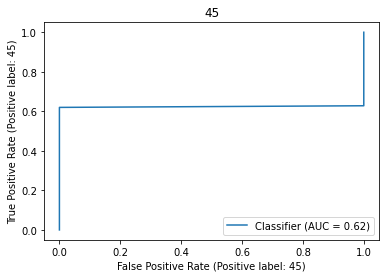}
     \end{subfigure}
    \begin{subfigure}[b]{0.33\textwidth}
         \centering
         \includegraphics[width=\textwidth]{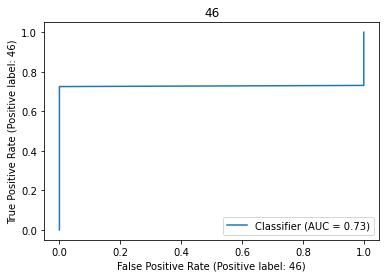}
     \end{subfigure}
    \begin{subfigure}[b]{0.33\textwidth}
         \centering
         \includegraphics[width=\textwidth]{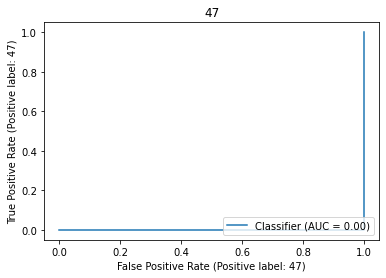}
     \end{subfigure}
     \\
    \begin{subfigure}[b]{0.33\textwidth}
         \centering
         \includegraphics[width=\textwidth]{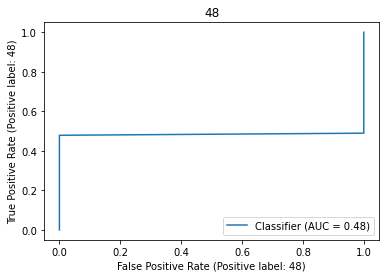}
     \end{subfigure}   
    \begin{subfigure}[b]{0.33\textwidth}
         \centering
         \includegraphics[width=\textwidth]{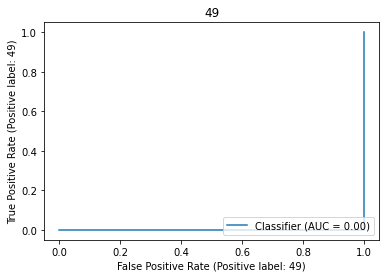}
     \end{subfigure}     
    \begin{subfigure}[b]{0.33\textwidth}
         \centering
         \includegraphics[width=\textwidth]{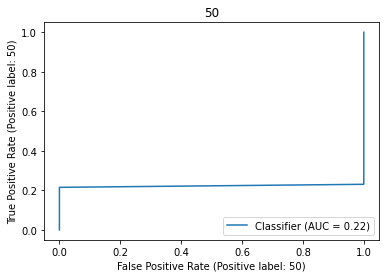}
     \end{subfigure}     
    \\
          \begin{subfigure}[b]{0.33\textwidth}
         \centering
         \includegraphics[width=\textwidth]{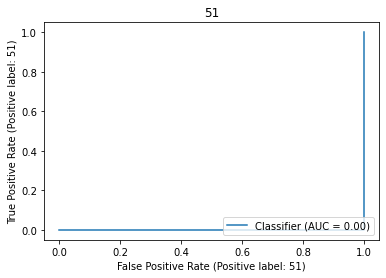}
     \end{subfigure}
     \begin{subfigure}[b]{0.33\textwidth}
         \centering
         \includegraphics[width=\textwidth]{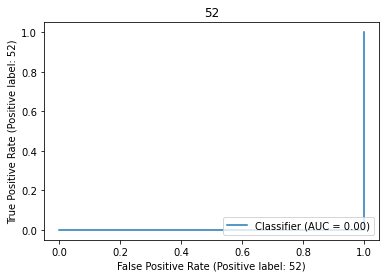}
     \end{subfigure}
     \begin{subfigure}[b]{0.33\textwidth}
         \centering
         \includegraphics[width=\textwidth]{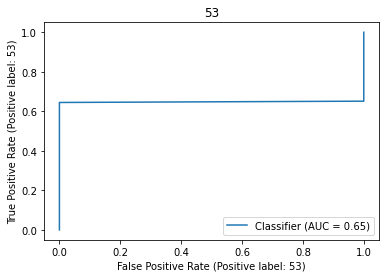}
     \end{subfigure}
     \\
    \caption{Figure of ROC for all the categories prediction (continued)}
    \label{fig:all_kinds_ROC}
\end{figure}

\begin{figure}[H]
    \centering 
     \begin{subfigure}[b]{0.33\textwidth}
         \centering
         \includegraphics[width=\textwidth]{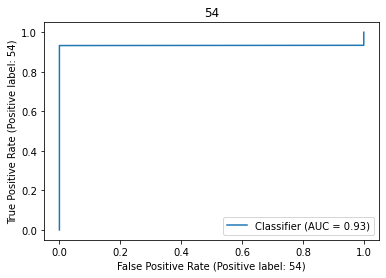}
     \end{subfigure}
    \begin{subfigure}[b]{0.33\textwidth}
         \centering
         \includegraphics[width=\textwidth]{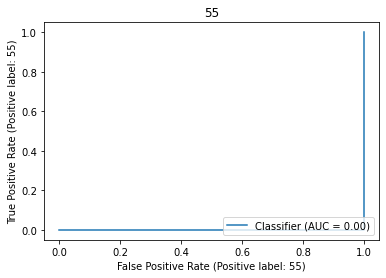}
     \end{subfigure}
    \begin{subfigure}[b]{0.33\textwidth}
         \centering
         \includegraphics[width=\textwidth]{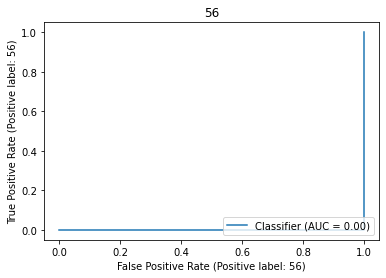}
     \end{subfigure}
     \\
    \begin{subfigure}[b]{0.33\textwidth}
         \centering
         \includegraphics[width=\textwidth]{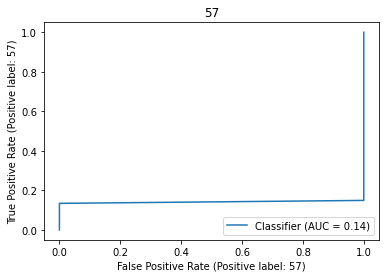}
     \end{subfigure}
    \begin{subfigure}[b]{0.33\textwidth}
         \centering
         \includegraphics[width=\textwidth]{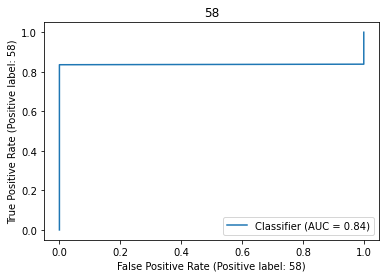}
     \end{subfigure}   
    \begin{subfigure}[b]{0.33\textwidth}
         \centering
         \includegraphics[width=\textwidth]{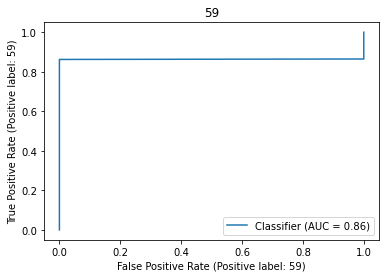}
     \end{subfigure}     
     \\
    \begin{subfigure}[b]{0.33\textwidth}
         \centering
         \includegraphics[width=\textwidth]{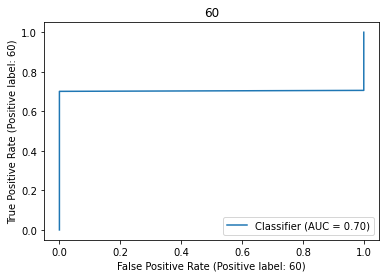}
     \end{subfigure}     
          \begin{subfigure}[b]{0.33\textwidth}
         \centering
         \includegraphics[width=\textwidth]{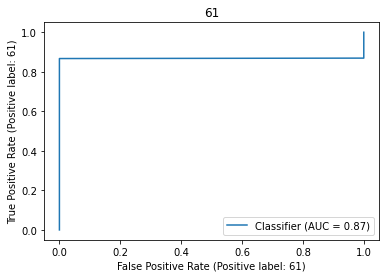}
     \end{subfigure}
     \begin{subfigure}[b]{0.33\textwidth}
         \centering
         \includegraphics[width=\textwidth]{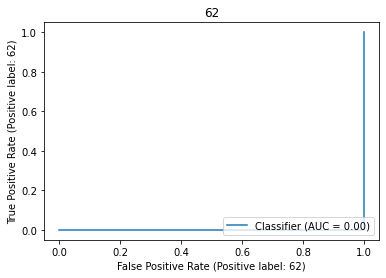}
     \end{subfigure}
     \begin{subfigure}[b]{0.33\textwidth}
         \centering
         \includegraphics[width=\textwidth]{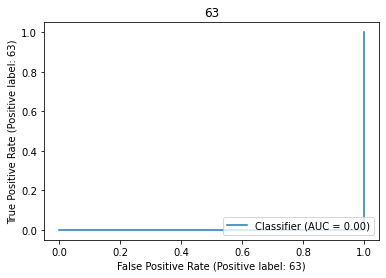}
     \end{subfigure}
     \begin{subfigure}[b]{0.33\textwidth}
         \centering
         \includegraphics[width=\textwidth]{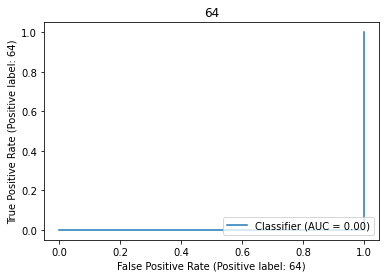}
     \end{subfigure}
    \begin{subfigure}[b]{0.33\textwidth}
         \centering
         \includegraphics[width=\textwidth]{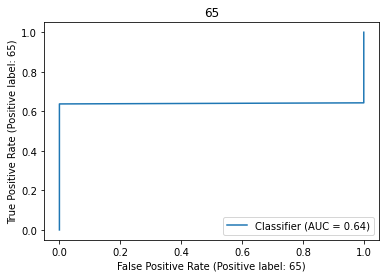}
     \end{subfigure}
    \begin{subfigure}[b]{0.33\textwidth}
         \centering
         \includegraphics[width=\textwidth]{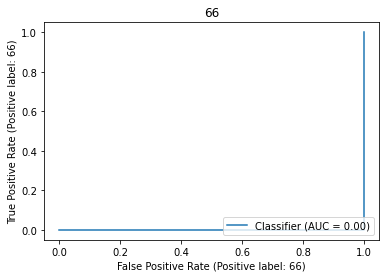}
     \end{subfigure}
    \caption{Figure of ROC for all the categories prediction (continued)}
    \label{fig:all_kinds_ROC}
\end{figure}

From the plots above, we can see that for most of the categories for mechanical components in our dataset, the curve is always above the diagonal line by the equation $y = x$, and the AUC (area under the ROC curve) reaches high value quickly. From that, our model does distinguish each type of mechanical component from others.

\subsection{Comparison to PointNet}

In this section, we compare our model and one of the essential models in point cloud classification, PointNet. First, we consider the number of parameters and the number in the PointNet model. By counting the numbers in the parameters attribute of both models in PyTorch, we discovered that our model has 1,816,515 parameters, while PointNet has 1,621,325 parameters. Therefore, the magnitude of the parameters of both models is similar. In contrast, the comparison of the model performance below shows that our model performs better by considering mode levels of information, specifically both global and local information of 3D data.

By the graph below, we can see that our model achieves higher accuracy than that of PointNet. As we can see, our model reaches higher accuracy within a reasonable number of epochs.
\begin{figure}
    \centering
    \includegraphics[width=0.4\textwidth]{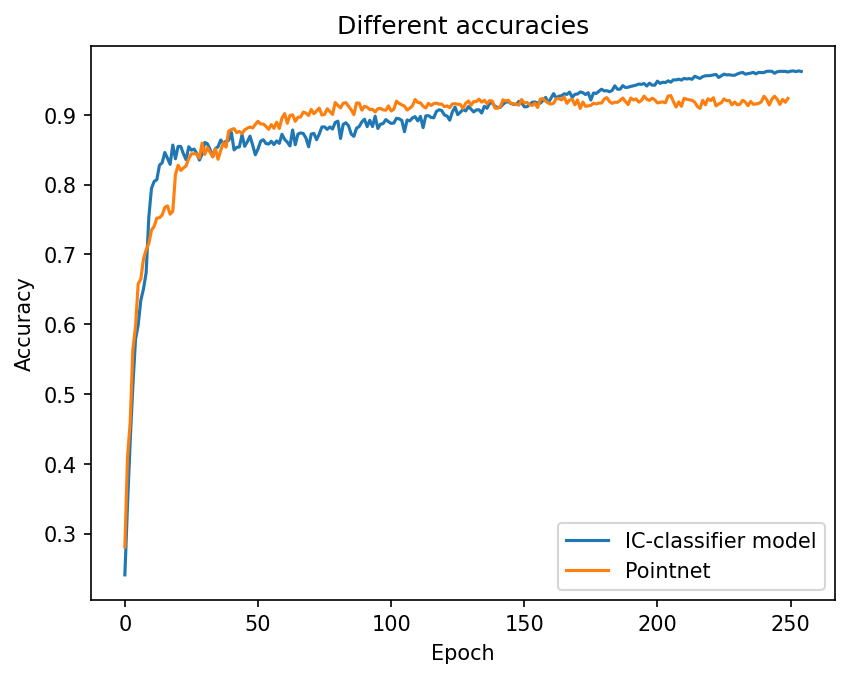}
    \caption{comparison between our model and PointNet}
    \label{fig:comparison_label}
\end{figure}

\subsection{Summary of result}
We run the model with hyperparameter $k$ being 20 with Stochastic gradient descent (SGD) and take a look at its performance at classifying objects, telling true positive labels from others, and we compare its performance with other state-of-the art models and see that our model can reach similar performance as well.

\section{Further work}

In this work, we have delivered a novel framework that could effectively identify mechanical components that outperform the current state-of-the-art models in identifying the type of mechanical components. However, there is still room for improvement in this framework. For instance, alternative approaches, such as capturing the essential quantities of the mechanical components into a bottleneck, might bring different performances. Also, the network structure could be tuned further to ensure better performance.

\section{Conclusion}

As seen in this paper, the model incorporates both the ability to identify global and local features of mechanical components by including geometric before embedding the points and a graphical neural network model that repeatedly computes the features of the point cloud as a graph and aggregates them. As displayed above, the model performs similarly to the state-of-the-art models in classifying mechanical components.  

\section*{Acknowledgements}

This work is supported by the National Natural Science Foundation of China under Grant 52175237.

\bibliographystyle{unsrt}  
\bibliography{IC_bib} 

\end{document}